\newtheorem{thm}{Theorem}
\DeclareMathOperator*{\argmin}{argmin}
\def \x {\mathbf{x}}
\def \z {\mathbf{z}}
\def \y {\mathbf{y}}
\def \w {\mathbf{w}}
\def \R {\mathbb{R}}
\def \g {\mathbf{g}}
\def \gh {\hat{\g}}
\def \gb {\bar{\g}}
\def \f {\mathbf{f}}
\def \fb {\bar{\f}}
\def \E {\mathrm{E}}
\def \D {\mathcal {D}}
\def \H {\mathcal {H}}
\def \Z {\mathcal {Z}}
\def \F {\mathcal {F}}
\def \U {\mathcal {U}}
\begin{document}

\title[$O(\log T)$ Projections for Stochastic Optimization]{$O(\log T)$ Projections for Stochastic Optimization of \\
Smooth and Strongly Convex Functions}

 \coltauthor{\Name{Lijun Zhang} \Email{zhanglij@msu.edu} \\
   \addr Department of Computer Science and Engineering\\
Michigan State University, East Lansing, MI 48824, USA\\
\Name{Tianbao Yang} \Email{tyang@ge.com}\\
\addr Machine Learning Lab, GE Global Research,\\
San Ramon, CA 94583, USA\\
    \Name{Rong Jin} \Email{rongjin@cse.msu.edu}\\
   \addr Department of Computer Science and Engineering\\
Michigan State University, East Lansing, MI 48824, USA\\
\Name{Xiaofei He} \Email{xiaofeihe@cad.zju.edu.cn}\\
\addr State Key Laboratory of CAD\&CG, College of Computer Science
  Zhejiang University, Hangzhou 310027, China}

\maketitle

\begin{abstract}
Traditional algorithms for stochastic optimization require projecting the solution at each iteration into a given domain to ensure its feasibility. When facing complex domains, such as positive semi-definite cones, the projection operation can be expensive, leading to a high computational cost per iteration. In this paper, we present a novel algorithm that aims to reduce the number of projections for stochastic optimization. The proposed algorithm combines the strength of several recent developments in stochastic optimization, including mini-batch, extra-gradient, and epoch gradient descent, in order to effectively explore the smoothness and strong convexity. We show, both in expectation and with a high probability, that when the objective function is both \emph{smooth} and \emph{strongly convex}, the proposed algorithm achieves the optimal $O(1/T)$ rate of convergence with only $O(\log T)$ projections. Our empirical study verifies the theoretical result.
\end{abstract}

\begin{keywords}
Epoch gradient descent, extra-gradient descent, mini-batch, strongly convex, smooth
\end{keywords}

\section{Introduction}
The goal of stochastic optimization is to solve the optimization problem
\[
\min_{\w \in \D} F(\w),
\]
using only the stochastic gradients of $F(\w)$. In particular, we assume there exists a gradient oracle, which for any point $\w \in \D$, returns a random vector $\gh(\w)$ that gives an unbiased estimate of the subgradient of $F(\cdot)$ at $\w$. A special case of stochastic optimization is the risk minimization problem, whose objective function is given by
\[
F(\w)=\E_{(\x,y)} \left[ \ell(\w;(\x,y)) \right],
\]
where $(\x,y)$ is an instance-label pair, $\ell$ is a convex loss function that measures the prediction error, and the expectation is taken oven the unknown joint distribution of $(\x,y)$~\citep{Zhang_SVM,COLT:Shalev:2009,NIPS2009_AGM}. The performance of stochastic optimization algorithms is typically characterized by the \emph{excess risk}
\[
F(\w_T) - \min_{\w \in \D} F(\w),
\]
where $T$ is the number of iterations and $\w_T$ is the solution obtained after making $T$ calls to the gradient oracle.

For general Lipschitz continuous convex functions, stochastic gradient descent exhibits the unimprovable $O(1/\sqrt{T})$ rate of convergence~\citep{Problem_Complexity,nemirovski-2008-robust}. For strongly-convex functions, the algorithms proposed in very recent works~\citep{Juditsky:SCS,COLT:Hazan:2011,ICML2012Rakhlin,NIPS2012_DualAve} achieve the optimal $O(1/T)$ rate~\citep{IT:SCO}. Although these convergence rates are significantly worse than the results in deterministic optimization, stochastic optimization is appealing due to its low per-iteration complexity. However, this is not the case when the domain $\D$ is complex. This is because most stochastic optimization algorithms require projecting the solution at each iteration into domain $\D$ to ensure its feasibility, an expensive operation when the domain is complex. In this paper, we show that if the objective function is smooth and strongly convex, it is possible to reduce the number of projections dramatically without affecting the convergence rate.

Our work is motivated by the difference in convergence rates between stochastic and deterministic optimization. When the objective function is smooth and convex, under the first-order oracle assumption, Nesterov's accelerated gradient method enjoys the optimal  $O(1/T^2)$ rate~\citep{nesterov2004introductory,Nesterov_Non_Smooth}. Thus, \emph{for deterministic optimization of smooth and convex functions, we can achieve an $O(1/\sqrt{T})$ rate by only performing $O(T^{1/4})$ updating}. When the objective function is smooth and strongly convex, the optimal rate for first-order algorithms is $O(1/\alpha^k)$, for some constant $\alpha >1$~\citep{nesterov2004introductory,Nesterov_Composite}. In other words, \emph{for deterministic optimization of smooth and strongly convex functions, we can achieve an $O(1/T)$ rate by only performing $O(\log T)$ updating}. The above observations inspire us to consider the following  questions.
\begin{compactenum}
  \item For Stochastic Optimization of Smooth and Convex functions (SOSC), is it possible to maintain the optimal $O(1/\sqrt{T})$ rate by performing $O(T^{1/4})$ projections?
  \item For Stochastic Optimization of Smooth and Strongly Convex functions (SOS$^2$C), is it possible to maintain the optimal $O(1/T)$ rate by performing $O(\log T)$ projections?
\end{compactenum}

For the 1st question, we have found a positive answer from literature. By combining mini-batches~\citep{NIPS2007_Topmoumoute} with the accelerated stochastic approximation~\citep{Lan:SCO}, we can achieve the optimal $O(1/\sqrt{T})$ rate by performing $O(T^{1/4})$ projections~\citep{NIPS2011_0942}. However, a naive application of mini-batches does not lead to the desired $O(\log T)$ complexity for SOS$^2$C. The main contribution of this paper is a novel stochastic optimization algorithm that answers the 2nd question positively.
Our theoretical analysis reveals, both in expectation and with a high probability, that the proposed algorithm achieves the optimal $O(1/T)$ rate by only performing $O(\log T)$ projections.

\section{Related Work}
In this section, we provide a brief review of the existing approaches for avoiding projections.
\subsection{Mini-batch based algorithms}
Instead of updating the solution after each call to the gradient oracle, mini-batch based algorithms use the average gradient over multiple calls to update the solution~\citep{NIPS2007_Topmoumoute,Shalev-ShwartzSSC11,ICML2011Dekel}.
For a fixed batch size $B$, the number of updates (and projections) is reduced from $O(T)$ to $O(T/B)$, and the variance of the stochastic gradient is reduced from $\sigma$ to $\sigma/\sqrt{B}$. By appropriately balancing between the loss cased by a smaller number of updates and the reduction in the variance of stochastic gradients, it is able to maintain the optimal rate of convergence.

The idea of mini-batches can be incorporated into any stochastic optimization algorithm that uses gradient-based updating rules. When the objective function is smooth and convex, combining mini-batches with the accelerated stochastic approximation~\citep{Lan:SCO} leads to
\[
O\left(\frac{B^2}{T^2} + \frac{1}{\sqrt{T}} \right)
\]
rate of convergence~\citep{NIPS2011_0942}. By setting $B=T^{3/4}$, we achieve the optimal $O(1/\sqrt{T})$ rate with only $O(T^{1/4})$ projections. When the target function is smooth and strongly convex, we can apply mini-batches to the optimal algorithms for strongly convex functions~\citep{NIPS2009_AGM,Lan:SCSC}, leading to
\[
O\left(\frac{B^2}{T^2} + \frac{1}{T} \right)
\]
rate of convergence~\citep{Dekel:2012:ODO}. In order to maintain the optimal $O(1/T)$ rate, the value of $B$ cannot be larger than $\sqrt{T}$, implying at least $O(\sqrt{T})$ projections are required. In contrast, the algorithm proposed in this paper achieves an $O(1/T)$ rate with only $O(\log T)$ projections.
\subsection{Projection free algorithms}
Due to the low iteration cost, Frank-Wolfe algorithm~\citep{Frank_Wolfe} or conditional gradient method~\citep{Conditional_Gradient} has seen a recent surge of interest in machine learning~\citep{Hazan:2008:SAS,Clarkson:2010:CSG,ICML2013Simon}. At each iteration of the Frank-Wolfe algorithm, instead of performing a projection that requires solving a constrained quadratic programming problem, it solves a constrained linear programming problem. For many domains of interest, including the positive semidefinite cone and the trace norm ball, the constrained linear problem can be solved more efficiently than a projection problem~\citep{ICML2013Jaggi}, making this kind of methods attractive for large-scale optimization.

In a recent work~\citep{ICML2012Hazan}, an online variant of the Frank-Wolfe algorithm is proposed. Although the online Frank-Wolfe algorithm exhibits an $O(1/\sqrt{T})$ convergence rate for smooth functions, it is unable to achieve the optimal $O(1/T)$ rate for strongly convex functions. Besides, the memory complexity of this algorithm is $O(T)$, making it unsuitable for large-scale optimization problems. Another related work is the stochastic gradient descent with only one projection~\citep{NIPS2012_OneP}. This algorithm is built upon the assumption that the solution domain can be characterized by an inequality constraint $g(\w) \leq 0$ and the gradient of $g(\cdot)$ can be evaluated efficiently. Unfortunately, this assumption does not hold for some commonly used domain (e.g., the trace norm ball). Compared to the projection free algorithms, our proposed method is more general because it make no assumption about the solution domain.

\section{Stochastic Optimization of Smooth and Strongly Convex Functions}
\subsection{Preliminaries}
We first define smoothness and strongly convexity.
\begin{definition}
A function $f: \D \rightarrow \R$ is $L$-smooth w.r.t. a norm $\|\cdot\|$ if $f$ is everywhere differentiable and
\[ 
\| \nabla f(\w) - \nabla f(\w') \|_* \leq L \|\w - \w'\|, \ \forall \w, \w' \in \D.
\]
where $\|\cdot\|_*$ is the dual norm.
\end{definition}
\begin{definition}
A smooth function $f: \D \rightarrow \R$ is $\lambda$-strongly convex w.r.t. a norm $\|\cdot\|$, if $f$ is everywhere differentiable and
\[ 
\| \nabla f(\w) - \nabla f(\w') \|_* \geq \lambda \|\w - \w'\|, \ \forall \w, \w' \in \D.
\]
\end{definition}
To simplify our analysis, we assume that both $\|\cdot\|$ and $\|\cdot\|_*$ are the vector $\ell_2$ norm in the following discussion.

Following~\citep{COLT:Hazan:2011}, we make the following assumptions about the gradient oracle.
 \begin{compactitem}
  \item There is a gradient oracle, which, for a given input point $\w$ returns a stochastic gradient $\gh(\w)$ whose expectation is the gradient of  $F(\w)$ at $\w$, i.e.,
      \[
      \E[\gh(\w)] = \nabla F(\w).
      \]
      We further assume the stochastic gradients obtained by calling the oracle are \emph{independent}.
   \item The gradient oracle is $G$-bounded, i.e.,
      \[
      \|\gh(\w)\|\leq G, \ \forall \w \in \D.
      \]
      We note that this assumption may be relaxed by assuming the orlicz norm of $\gh(\w)$ to be bounded~\citep{Lan:SCO}, i.e., $\E[\exp(\|\gh(\w)\|^2/G^2)] \leq \exp(1)$. Although our theoretical result holds even under the assumption of bounded orlicz norm, we choose the $G$-bounded gradient for simplicity.
 \end{compactitem}

Define $\w_*$ as the optimal solution that minimizes $F(\w)$, i.e., $\w_* = \argmin_{\w \in \D} F(\w)$.
Using the strongly convexity of $F(\w)$, we have \citep{COLT:Hazan:2011}
\begin{equation} \label{eqn:1}
\frac{\lambda}{2} \|\w-\w_*\|^2 \leq F(\w)-F(\w_*) \leq \frac{2G^2}{\lambda},  \forall \ \w \in \D.\\
\end{equation}
\subsection{The Algorithm}
Algorithm~\ref{alg:3} shows the proposed method for Stochastic Optimization of Smooth and Strongly Convex functions (SOS$^2$C), that achieves the optimal $O(1/T)$ rate of convergence by performing $O(\log T)$ projections. The inputs of the algorithm are: (1) $\eta$, the step size, (2) $M$, the fixed number of updates per epoch/stage, (3) $B^1$, the initial batch size, and (4) $T$, the total number of calls to the gradient oracle. With a slight abuse of notation, we use $\gh(\w,i)$ to denote the stochastic gradient at $\w$ obtained after making the $i$-th call to the oracle. We denote the projection of $\w$ onto the domain $\D$ by $\Pi_{\D}(\w)$.

Similar to the epoch gradient descent algorithm~\citep{COLT:Hazan:2011}, the proposed algorithm consists of two layers of loops. It uses the outer ({\bfseries while}) loop to divide the learning process into a sequence of epochs (Step~5 to Step~12). Similar to~\citep{COLT:Hazan:2011}, the number of calls to the gradient oracle made by Algorithm~\ref{alg:3} increases exponentially over the epoches, a key that allows us to achieve the optimal $O(1/T)$ convergence rate for strongly convex functions. We note that other techniques, such as the $\alpha$-suffix averaging~\citep{ICML2012Rakhlin}, can also be used as an alternative.

In the inner ({\bfseries for}) loop of each epoch, we combine the idea of mini-batches~\citep{ICML2011Dekel} with extra-gradient descent~\citep{nemirovski-2005-prox,Nemirovski:SMP}. We choose extra-gradient descent because it allows us to replace in the excess risk bound $\E[\|\gh(\w)\|^2]$ with $\E[\|\gh(\w) - \E[\gh(\w)]\|^2]$, the variance of the stochastic gradient $\gh(\w)$, thus opening the door to fully exploring the capacity of mini-batches in variance reduction.


To be more specific, in the $k$-th epoch, we maintain two sequences of solutions $\{\w_t^k\}_{t=1}^M$ and $\{\z_{t}^k\}_{t=1}^M$, where $\z_t^k$ is an auxiliary solution that allows us to effectively explore the smoothness of the loss function. At each iteration $t$ of the $k$-th epoch, we calculate the average gradients $\gb_t^k$ and
$\fb_t^k$ by calling the gradient oracle $B^k$ times (Steps 6 and 8), and update the solutions $\w_t^k$ and $\z_t^k$ using the average gradients (Steps 7 and 9). The batch size $B^k$ is fixed inside each epoch but doubles from epoch to epoch (Step 11). This is in contrast to most mini-batch based algorithms that have a fixed batch size. This difference is critical for achieving $O(1/T)$ convergence rate with only $O(\log T)$ updates.

\begin{algorithm}[t]
\caption{$\log T$ Projections for SOS$^2$C}
\begin{algorithmic}[1]
\STATE {\bfseries Input:}  parameters $\eta$, $M$, $B^1$ and $T$
\STATE Initialize $\w_1^1 \in \D$ arbitrarily
\STATE Set $k=1$
\WHILE{$2 M \sum_{i=1}^k B^i \leq T$}
\FOR{$t=1$ to $M$} \label{stp:1}
\STATE Compute the average gradient at $\w_t^k$ over $B^k$ calls to the gradient oracle
\[
\gb_t^k = \frac{1}{B^k} \sum_{i=1}^{B^k} \gh(\w_t^k,i)
\]
\STATE Update
\[
\z_t^k=\Pi_{\D}\left(\w_t^k-\eta \gb_t^k  \right)
\]
\STATE Compute the average gradient at $\z_t^k$ over $B^k$ calls to the gradient oracle
\[
\fb_t^k = \frac{1}{B^k} \sum_{i=1}^{B^k}\gh(\z_t^k,i)
\]
\STATE Update
\[
\w_{t+1}^k=\Pi_{\D}\left(\w_t^k-\eta \fb_t^k  \right)
\]
\ENDFOR
\STATE $\w_1^{k+1}=\frac{1}{M} \sum_{t=1}^{M} \z_t^k$, and $B^{k+1}=2B^k$
\STATE $k=k+1$
\ENDWHILE
\STATE {\bfseries Return:} $\w_1^{k}$
\end{algorithmic}\label{alg:3}
\end{algorithm}

\subsection{The main results}
The following theorem bounds the expected excess risk of the solution return by Algorithm~\ref{alg:3} and the number of projections.

\begin{thm} \label{thm:1}
Set the parameters in Algorithm~\ref{alg:3} as
\begin{eqnarray*}
\eta = \frac{1}{\sqrt{6}L},  \  M  = \frac{4}{\eta \lambda} \textrm{ and } B^1 =12 \eta \lambda.
\end{eqnarray*}
The final point $\w_1^k$ returned by Algorithm~\ref{alg:3} makes at most $T$ calls to the gradient oracle, and has its excess risk bounded by
\[
\E[F(\w_1^k) - F(\w_*)] \leq \frac{384  G^2}{ \lambda T} = O \left(\frac{1}{T} \right),
\]
and the total number of projections bounded by
\[
\frac{8 \sqrt{6} L}{\lambda} \left \lfloor \log_2\left( \frac{T}{96}+1\right) \right \rfloor = O\left(\log T\right).
\]
\end{thm}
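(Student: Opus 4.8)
The plan is to prove a one-epoch contraction for the inner stochastic extra-gradient loop, iterate it over epochs, and then count oracle calls and projections.

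\textbf{Step 1 (single-epoch bound).} Fix an epoch $k$, drop superscripts, write $B=B^k$ and $\w_t,\z_t,\gb_t,\fb_t$, and set $\xi_t=\gb_t-\nabla F(\w_t)$, $\zeta_t=\fb_t-\nabla F(\z_t)$. Applying the variational inequality characterizing $\Pi_{\D}$ to $\z_t=\Pi_{\D}(\w_t-\eta\gb_t)$ tested against $\w_{t+1}$, and to $\w_{t+1}=\Pi_{\D}(\w_t-\eta\fb_t)$ tested against $\w_*$, and adding, I get for every $t$
\[
\eta\langle \fb_t,\z_t-\w_*\rangle\le \tfrac{1}{2}\|\w_t-\w_*\|^2-\tfrac{1}{2}\|\w_{t+1}-\w_*\|^2-\tfrac{1}{2}\|\w_t-\z_t\|^2-\tfrac{1}{2}\|\z_t-\w_{t+1}\|^2+\eta\langle \fb_t-\gb_t,\z_t-\w_{t+1}\rangle.
\]
Splitting $\fb_t-\gb_t=(\nabla F(\z_t)-\nabla F(\w_t))+(\zeta_t-\xi_t)$, bounding the first part by $L$-smoothness and the second by Young's inequality, with the Young constants chosen so that the resulting $\|\z_t-\w_{t+1}\|^2$ terms are absorbed by $-\tfrac{1}{2}\|\z_t-\w_{t+1}\|^2$, leaves the coefficient $\eta^2L^2-\tfrac{1}{2}$ on $\|\w_t-\z_t\|^2$, which is nonpositive for $\eta=1/(\sqrt{6}L)$, plus a noise term $2\eta^2(\|\zeta_t\|^2+\|\xi_t\|^2)$. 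Using convexity $\langle\nabla F(\z_t),\z_t-\w_*\rangle\ge F(\z_t)-F(\w_*)$, telescoping over $t=1,\dots,M$, and taking expectations --- here $\E\langle\zeta_t,\z_t-\w_*\rangle=0$ because $\z_t$ is determined before the fresh batch at $\z_t$ is drawn, and $\E\|\xi_t\|^2,\E\|\zeta_t\|^2\le G^2/B^k$ by independence and the $G$-bound --- and then Jensen's inequality for $\w_1^{k+1}=\tfrac1M\sum_t\z_t^k\in\D$, I obtain
\[
\E\!\left[F(\w_1^{k+1})-F(\w_*)\right]\le \frac{\E\|\w_1^k-\w_*\|^2}{2\eta M}+\frac{4\eta G^2}{B^k}.
\]

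\textbf{Step 2 (epoch recursion).} Substituting the strong-convexity bound $\|\w_1^k-\w_*\|^2\le\tfrac2\lambda(F(\w_1^k)-F(\w_*))$ from \eqref{eqn:1} and $M=4/(\eta\lambda)$, and writing $a_k:=\E[F(\w_1^k)-F(\w_*)]$ with $B^k=B^1 2^{k-1}$, the recursion becomes $a_{k+1}\le\tfrac14 a_k+4\eta G^2/B^k$. I then show $a_k\le 24\eta G^2/B^k$ for all $k$ by induction: the base case $a_1\le 2G^2/\lambda=24\eta G^2/B^1$ is again \eqref{eqn:1}, and the step is $a_{k+1}\le \tfrac14\cdot\tfrac{24\eta G^2}{B^k}+\tfrac{4\eta G^2}{B^k}=\tfrac{10\eta G^2}{B^k}\le\tfrac{24\eta G^2}{B^{k+1}}$.

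\textbf{Step 3 (counting and conclusion).} Each inner iteration uses $2B^k$ oracle calls and $2$ projections, so epoch $k$ costs $2MB^k$ calls and $2M$ projections; the {\bfseries while}-loop runs for $k=1,\dots,N$ where $N$ is the largest index with $2M\sum_{i=1}^N B^i=2MB^1(2^N-1)\le T$, i.e.\ $N=\lfloor\log_2(T/96+1)\rfloor$ since $2MB^1=96$. Hence the total number of calls is $\le T$, and the total number of projections is $2MN=(8/(\eta\lambda))N=(8\sqrt{6}L/\lambda)\lfloor\log_2(T/96+1)\rfloor$, which is the claimed projection bound. The returned point is $\w_1^{N+1}$, and since $2^N>\tfrac12(T/96+1)>T/192$ we get $B^{N+1}=12\eta\lambda\,2^N>\eta\lambda T/16$, so $a_{N+1}\le 24\eta G^2/B^{N+1}<384G^2/(\lambda T)$, which is the claimed excess-risk bound.

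\textbf{Main obstacle.} The delicate part is Step 1: the extra-gradient (two-projection) step is exactly what lets the per-epoch noise enter as the \emph{variance} $G^2/B^k$ rather than the second moment $G^2$, which is what makes the doubling batch sizes pay off to yield $O(1/T)$ with only $O(\log T)$ epochs. Carrying it out needs the two projection inequalities with the correct test points, the Young-inequality balancing of $\fb_t-\gb_t$ against the slack $-\tfrac12\|\z_t-\w_{t+1}\|^2$, the step-size restriction $\eta L\le 1/\sqrt6$ to discard the $\|\w_t-\z_t\|^2$ terms, and the measurability observation that $\E\langle\zeta_t,\z_t-\w_*\rangle=0$. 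Everything after that --- the epoch recursion, the induction, and the bookkeeping --- is routine.
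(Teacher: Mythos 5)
Your proposal is correct and follows essentially the same route as the paper: a variance-dependent one-epoch bound for the mini-batch extra-gradient loop (you derive the one-step inequality directly from the two projection variational inequalities, which is exactly the content of the Nemirovski lemma the paper cites, and your induction hypothesis $a_k\le 24\eta G^2/B^k$ is identically the paper's $V_k=G^2/(\lambda 2^{k-2})$), followed by the same strong-convexity epoch recursion and the same bookkeeping. The only difference is a slightly sharper Young balancing giving a noise constant of $4\eta G^2/B^k$ in place of the paper's $6\eta G^2/B^k$, which is immaterial since both satisfy the recursion with the stated parameters.
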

Theorem~{\ref{thm:1}} shows that in expectation, Algorithm~\ref{alg:3} achieve an $O(1/T)$ convergence with $O(\log T)$ updates. The following theorem gives a high probability bound of the excess risk for Algorithm~\ref{alg:3}.

\begin{thm} \label{thm:2}
Set the parameters in Algorithm~\ref{alg:3} as
\begin{eqnarray*}
\eta = \frac{1}{\sqrt{6}L},  \  M  = \frac{4}{\eta \lambda} \textrm{ and } B^1 =\alpha \eta \lambda,
\end{eqnarray*}
where $\alpha$  is defined below. For any $ 0 < \delta < 1$, let
\begin{align}
\tilde{\delta}=&\frac{\delta}{k^\dag}, \nonumber\\
k^\dag=&\left \lfloor \log_2\left( \frac{ T}{8 \alpha}+1\right) \right  \rfloor = O(\log T),\label{eqn:smooth:k} \\
\alpha  = & \max\left\{400 \log^2 \frac{8 M}{\tilde{\delta}} , 1 + 64 \log^2 \frac{8 M}{\tilde{\delta}}  \left( \log \frac{4N}{\tilde{\delta}} + \frac{4}{9} \log^2 \frac{4N}{\tilde{\delta}} \right) \right\} \label{eqn:smooth:alpha:2} \\
= & O \left[\left( \log \log T + \log \frac{1}{\delta}\right)^4\right], \nonumber \\
N =  & \left \lceil \log_2 \frac{4M T }{\eta \lambda} \right\rceil=O (\log T). \label{eqn:lemma:N}
\end{align}
The final point $\w_1^k$ returned by Algorithm~\ref{alg:3} makes at most $T$ calls to the gradient oracles, performs
\[
\frac{8 \sqrt{6} L}{\lambda} \left \lfloor \log_2\left( \frac{T}{8 \alpha}+1\right) \right \rfloor = O\left(  \log T\right)
\]
projections, and with a probability at least $1-\delta$, has its excess risk bounded by
\[
F(\w_1^k) - F(\w_*) \leq \frac{32 \alpha G^2}{ \lambda T} = O\left( \frac{ ( \log \log T + \log 1/\delta )^4}{T}  \right).
\]
\end{thm}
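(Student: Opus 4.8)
The proof is a high‑probability upgrade of the argument behind Theorem~\ref{thm:1}: I would establish a \emph{per‑epoch contraction} that holds with high probability, and then chain it over the $k^\dagger=O(\log T)$ epochs with a union bound. Fix epoch $k$ and write $B=B^k$, $\w_t=\w_t^k$, $\z_t=\z_t^k$, $\Delta_k = F(\w_1^k)-F(\w_*)$. The first block of work is the deterministic skeleton. Applying the standard extra‑gradient / mirror‑prox inequality to Steps~6--9, using $L$‑smoothness and the choice $\eta=1/(\sqrt6 L)$ so that the $-\frac1{2\eta}\sum_t\|\w_t-\z_t\|^2$ terms dominate $\eta\sum_t\|\nabla F(\w_t)-\nabla F(\z_t)\|^2$, I would obtain
\[
\sum_{t=1}^M \langle \nabla F(\z_t),\,\z_t-\w_*\rangle \;\le\; \frac{\|\w_1-\w_*\|^2}{2\eta} \;+\; \sum_{t=1}^M \big\langle \fb_t-\nabla F(\z_t),\,\w_*-\z_t\big\rangle \;+\; \sum_{t=1}^M \big\langle \gb_t-\nabla F(\w_t),\,\z_t-\w_{t+1}\big\rangle ,
\]
i.e.\ the squared‑gradient‑norm term of plain SGD is replaced by the gradient‑\emph{variance}, which is the whole point of using extra‑gradient together with mini‑batching. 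Using convexity on the left, $\sum_t\langle\nabla F(\z_t),\z_t-\w_*\rangle\ge M\big(F(\w_1^{k+1})-F(\w_*)\big)$ since $\w_1^{k+1}=\frac1M\sum_t\z_t$, and $\lambda$‑strong convexity on the right via $\|\w_1-\w_*\|^2\le\frac2\lambda\Delta_k$ and $M=4/(\eta\lambda)$, this becomes, modulo the noise terms, $\Delta_{k+1}\le \tfrac14\Delta_k + (\text{stochastic deviation})$.

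\textbf{The crux: high‑probability control of the deviation.} The two noise sums are martingale sums over the $t\in\{1,\dots,M\}$ iterations and the $i\in\{1,\dots,B\}$ oracle calls, with increments of the form $\eta\langle\gh(\cdot,i)-\E[\gh(\cdot)],\,\w_*-\w_t\rangle$ (and the analogue at $\z_t$): each is bounded in magnitude by $O(\eta G\|\w_t-\w_*\|)$ and has conditional variance $O(\eta^2G^2\|\w_t-\w_*\|^2/B)$ after the mini‑batch average. The difficulty is that these bounds depend on the \emph{random} iterate distances, so Azuma alone is useless — it would force $B^1$ to grow polynomially in $T$ and destroy the $O(\log T)$ projection count. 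Instead I would use a Freedman/Bernstein‑type martingale tail, bounding the predictable quadratic variation by $O\!\big(\eta^2G^2\sum_t\|\w_t-\w_*\|^2/B\big)$ and then \emph{self‑bounding}: each $\|\w_t-\w_*\|^2$ is itself at most $\frac2\lambda(F(\w_t^k)-F(\w_*))$, which on the good event is controlled by the epoch induction (with the crude a priori ceiling $\|\w-\w_*\|^2\le 4G^2/\lambda^2$ from~\eqref{eqn:1} for the initialization). Because the quadratic‑variation bound must hold uniformly over the a priori unknown range of $\sum_t\|\w_t-\w_*\|^2$, I would peel that range into $N=O(\log T)$ geometric scales (hence the definition \eqref{eqn:lemma:N}) and union‑bound over them; the resulting $\log(4N/\tilde\delta)$ factor, entering the quadratic inequality for $\sqrt{\Delta_{k+1}}$ and getting squared when that inequality is solved, is exactly the source of the $\log^4$ blow‑up inside $\alpha$ in \eqref{eqn:smooth:alpha:2}. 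Choosing $B^1=\alpha\eta\lambda$ with $\alpha$ as in \eqref{eqn:smooth:alpha:2} makes the concentration constants small enough that one closes $\Delta_{k+1}\le \tfrac12\Delta_k + O\!\big(\alpha G^2/(\lambda B^k)\big)$ with probability at least $1-\tilde\delta$ in each epoch.

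\textbf{Chaining and bookkeeping.} Since $B^k=2^{k-1}B^1$, both terms of the recursion halve each epoch, so $\Delta_k=O\!\big(\alpha G^2/(\lambda B^k)\big)$; at the terminal epoch the stopping rule $2M\sum_{i\le k^\dagger}B^i\le T$ forces $B^{k^\dagger}=\Theta(T/M)=\Theta(\eta\lambda T)$, and tracking the constants exactly as in Theorem~\ref{thm:1} gives $\Delta_{k^\dagger}\le 32\alpha G^2/(\lambda T)$. The failure probability is a union over the $\le k^\dagger$ epochs, each contributing $\tilde\delta=\delta/k^\dagger$ (itself already a union over the $M$ iterations and $N$ scales inside the logs), for a total at most $\delta$; and the projection count is the two projections of Steps~7 and~9 times $M$ iterations times $k^\dagger$ epochs, i.e.\ $2Mk^\dagger=\frac{8\sqrt6 L}{\lambda}k^\dagger=O(\log T)$. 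I expect Step~3 — turning the in‑expectation variance bound into a uniform high‑probability one via the peeling/self‑bounding device while keeping $\alpha$ merely polylogarithmic, and carrying the constants so the final bound is precisely $32\alpha G^2/(\lambda T)$ — to be the main obstacle; the rest is essentially the bookkeeping already present in the proof of Theorem~\ref{thm:1}.
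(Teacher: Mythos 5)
Your plan follows the paper's proof essentially step for step: the extra-gradient/mini-batch skeleton is Lemma~\ref{lem:inner:loop}; the gradient-variance terms are controlled by a vector concentration bound union-bounded over the $M$ iterations (Lemma~\ref{lem:variance}); the martingale term is handled, exactly as you describe, by Bernstein's inequality for martingales combined with peeling the random quadratic variation over $O(\log T)$ geometric scales between an a priori floor and the ceiling $4MG^2/\lambda^2$ from~(\ref{eqn:1}) (Lemma~\ref{lem:3}); and the per-epoch contraction is chained by induction with a union bound over the $k^\dag$ epochs. The architecture, the source of the $\log^4$ factor in $\alpha$, and the bookkeeping all match.

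The one step that would fail as written is how you discharge the iterate-distance-dependent quantities after Bernstein and peeling. You state the skeleton with only plain convexity on the left, discarding the $-\frac{\lambda}{2}\sum_{t}\|\z_t^k-\w_*\|^2$ term, and then propose to control the iterate distances by ``the epoch induction'' via $\|\w_t^k-\w_*\|^2\le\frac{2}{\lambda}(F(\w_t^k)-F(\w_*))$. But the induction only controls $F(\w_1^k)$, the epoch's starting point; nothing bounds $F(\w_t^k)$ for $t>1$ (the inner loop is not a descent method, and only the average of the $\z_t^k$ is controlled). The crude ceiling alone is also insufficient: plugging $A=\sum_t\|\z_t^k-\w_*\|^2\le 4MG^2/\lambda^2$ into the $\sqrt{C^2A\tau}$ term of the Bernstein bound yields, after dividing by $M$, a contribution of order $G^2/(\lambda\sqrt{MB^k})$, and since $MB^k=4\alpha\,2^{k-1}$ grows like $2^k$ rather than $4^k$, this term does not halve from epoch to epoch. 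Nor can a quadratic inequality in $\sqrt{\Delta_{k+1}}$ rescue it, because Jensen only gives $A\ge M\|\w_1^{k+1}-\w_*\|^2$, so $A$ cannot be bounded above in terms of $\Delta_{k+1}$. The repair is what the paper actually does: retain the $-\frac{\lambda}{2M}\sum_t\|\z_t^k-\w_*\|^2$ term produced by strong convexity in Lemma~\ref{lem:inner:loop}, split $2\sqrt{C^2A\tau}\le\frac{4C^2\tau}{\lambda}+\frac{\lambda A}{4}$ together with the matching choice $\theta=3\lambda/(4\tau)$ in the Young step of Lemma~\ref{lem:3}, and cancel the resulting $\frac{\lambda A}{2}$ against the retained negative term. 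With that correction the rest of your argument goes through and reproduces the stated constants.
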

\textbf{Remark:} It is worth noting that we achieve the high probability bound without making any modifications to Algorithm~\ref{alg:3}. This is in contrast to the epoch gradient descent algorithm~\citep{COLT:Hazan:2011} that needs to shrink the domain size in order to obtain the desirable high probability bound, which could potentially lead to an additional computational cost in performing projection. We remove the shrinking step by effectively exploring the peeling technique~\citep{Local_RC}.

The number of projections required by Algorithm~\ref{alg:3}, according to Theorem~\ref{thm:2}, exhibits a linear dependence on the conditional number $L/\lambda$, which can be very large when dealing with ill-conditioned optimization problems. In the deterministic setting, the convergence rate only depends on the square root of the conditional number~\citep{nesterov2004introductory,Nesterov_Composite}. Thus, we conjecture that it may be possible to improve the dependence on the conditional number to its square root in the stochastic setting, a problem that will be examined in the future.

\section{Analysis}
We here present the proofs of main theorems. The omitted proofs are provided in the supplementary material.
\subsection{Proof of Theorem~\ref{thm:1}}
Since we make use of the the multi-stage learning strategy, the proof provided below is similar to the proof in~\citep{COLT:Hazan:2011}. We begin by analyzing the property of the inner loop in Algorithm~\ref{alg:3}, which is a combination of mini-batches and the extra-gradient descent. To this end, we have the following lemma.
\begin{lemma} \label{lem:inner:loop}
Let $\eta = 1/[\sqrt{6}L]$ in Algorithm~\ref{alg:3}. Then, we have
\begin{eqnarray}
F\left( \frac{1}{M}  \sum_{t=1}^M \z_t^k\right) -F(\w_*)& \leq & \frac{\|\w_{1}^k-\w_*\|^2}{2 M \eta} - \frac{\lambda}{2M} \sum_{t=1}^M \|\z_t^k - \w_*\|^2 \nonumber \\
 &  &  +  \frac{3 \eta}{M} \left( \sum_{t=1}^M \|\gb_t^k - \g_t^k\|^2 +     \sum_{t=1}^M \|\fb_t^k - \f_t^k\|^2  \right) \label{eqn:variance} \\
 & &  +  \frac{1}{M} \sum_{t=1}^M \langle \f_t^k-\fb_t^k, \z_t^k - \w_*  \rangle   \label{eqn:martingle}
\end{eqnarray}
where
\[
\g_t^k= \nabla F(\w_t^k) \textrm{ and } \f_t^k =\nabla F(\z_t^k).
\]
Taking the conditional expectation of the inequality, we have
\[
\E_{k-1}\left[ F\left( \frac{1}{M}  \sum_{t=1}^M \z_t^k\right) \right] -F(\w_*) \leq  \frac{\|\w_{1}^k-\w_*\|^2}{2 M \eta} + \frac{6 \eta G^2 }{B^k}.
\]
where $\E_{k-1}[\cdot]$ denotes the expectation conditioned on all the randomness up to epoch $k-1$.
\end{lemma}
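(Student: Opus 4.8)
The inner loop is an extra-gradient (Mirror--Prox) iteration fed with mini-batched gradients, so the plan is to run the usual per-iteration Mirror--Prox telescoping and then take conditional expectations. The only fact used about the updates is the projection inequality: if $p=\Pi_\D(\w-\eta\g)$ then, for every $u\in\D$,
\[
\eta\langle\g,\,p-u\rangle\ \le\ \tfrac12\big(\|\w-u\|^2-\|p-u\|^2-\|\w-p\|^2\big),
\]
which follows from $\langle\w-\eta\g-p,\,u-p\rangle\le 0$ and the identity $2\langle\w-p,\,p-u\rangle=\|\w-u\|^2-\|p-u\|^2-\|\w-p\|^2$. I would apply this to Step~7 (with $p=\z_t^k$, $\g=\gb_t^k$) using the auxiliary choice $u=\w_{t+1}^k$, and to Step~9 (with $p=\w_{t+1}^k$, $\g=\fb_t^k$) using a generic $u\in\D$; adding the two and regrouping the inner products via $\langle\gb_t^k,\z_t^k-\w_{t+1}^k\rangle+\langle\fb_t^k,\w_{t+1}^k-u\rangle=\langle\fb_t^k,\z_t^k-u\rangle+\langle\gb_t^k-\fb_t^k,\z_t^k-\w_{t+1}^k\rangle$ gives
\[
\eta\langle\fb_t^k,\z_t^k-u\rangle\ \le\ \tfrac12\big(\|\w_t^k-u\|^2-\|\w_{t+1}^k-u\|^2\big)-\tfrac12\|\w_t^k-\z_t^k\|^2-\tfrac12\|\z_t^k-\w_{t+1}^k\|^2+\eta\langle\gb_t^k-\fb_t^k,\w_{t+1}^k-\z_t^k\rangle .
\]

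The crux is to absorb the gradient mismatch. I apply Young's inequality to the cross term, $\eta\langle\gb_t^k-\fb_t^k,\w_{t+1}^k-\z_t^k\rangle\le\tfrac{\eta^2}{2}\|\gb_t^k-\fb_t^k\|^2+\tfrac12\|\z_t^k-\w_{t+1}^k\|^2$, so the $-\tfrac12\|\z_t^k-\w_{t+1}^k\|^2$ term cancels; then split $\gb_t^k-\fb_t^k=(\gb_t^k-\g_t^k)+(\g_t^k-\f_t^k)+(\f_t^k-\fb_t^k)$ and use $L$-smoothness, $\|\g_t^k-\f_t^k\|=\|\nabla F(\w_t^k)-\nabla F(\z_t^k)\|\le L\|\w_t^k-\z_t^k\|$. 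With the prescribed $\eta=1/(\sqrt6\,L)$ the resulting $\|\w_t^k-\z_t^k\|^2$ term carries a coefficient strictly less than $\tfrac12$, hence it is swallowed by the available $-\tfrac12\|\w_t^k-\z_t^k\|^2$ and dropped --- this is exactly the step that forces $\eta=\Theta(1/L)$ and uses smoothness, and I expect it to be the main thing to get right. What is left after dividing by $\eta$ is $\langle\fb_t^k,\z_t^k-u\rangle\le\tfrac{1}{2\eta}(\|\w_t^k-u\|^2-\|\w_{t+1}^k-u\|^2)+3\eta\big(\|\gb_t^k-\g_t^k\|^2+\|\fb_t^k-\f_t^k\|^2\big)$ (the constant $3\eta$ being a safe, not tight, bound).

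To finish the deterministic inequality \eqref{eqn:variance}--\eqref{eqn:martingle}: put $u=\w_*$, write $\langle\fb_t^k,\z_t^k-\w_*\rangle=\langle\f_t^k,\z_t^k-\w_*\rangle+\langle\fb_t^k-\f_t^k,\z_t^k-\w_*\rangle$, bound $\langle\nabla F(\z_t^k),\z_t^k-\w_*\rangle\ge F(\z_t^k)-F(\w_*)+\tfrac\lambda2\|\z_t^k-\w_*\|^2$ by strong convexity, sum over $t=1,\dots,M$ so the quadratics telescope (the last one, $-\|\w_{M+1}^k-\w_*\|^2$, being dropped), divide by $M$, and use Jensen: $F(\tfrac1M\sum_t\z_t^k)\le\tfrac1M\sum_t F(\z_t^k)$. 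For the expectation statement, condition on the randomness that precedes the batch defining $\fb_t^k$: then $\z_t^k$ and $\f_t^k$ are determined and $\E[\fb_t^k\mid\cdot]=\f_t^k$, so $\E_{k-1}\langle\f_t^k-\fb_t^k,\z_t^k-\w_*\rangle=0$, killing \eqref{eqn:martingle}; likewise a $B^k$-average of independent unbiased $G$-bounded vectors has second moment of its deviation at most $G^2/B^k$, so $\E_{k-1}\|\gb_t^k-\g_t^k\|^2\le G^2/B^k$ and $\E_{k-1}\|\fb_t^k-\f_t^k\|^2\le G^2/B^k$; dropping the nonpositive $-\tfrac{\lambda}{2M}\sum_t\|\z_t^k-\w_*\|^2$ and noting $\w_1^k$ is fixed given epochs $1,\dots,k-1$, we obtain $\E_{k-1}[F(\tfrac1M\sum_t\z_t^k)]-F(\w_*)\le\tfrac{\|\w_1^k-\w_*\|^2}{2M\eta}+\tfrac{6\eta G^2}{B^k}$.
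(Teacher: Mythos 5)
Your proof is correct and follows essentially the same route as the paper: the only difference is that where the paper invokes Lemma~3.1 of \citet{nemirovski-2005-prox} as a black box, you re-derive its Euclidean special case from the two projection inequalities (picking up a harmless factor $\tfrac{3\eta^2}{2}$ instead of $3\eta^2$ on the variance terms, still within the stated $3\eta/M$ after summing and dividing). The remaining steps --- splitting $\gb_t^k-\fb_t^k$ into three pieces, absorbing the $L^2\|\w_t^k-\z_t^k\|^2$ term via $\eta=1/(\sqrt{6}L)$, strong convexity, telescoping, Jensen, and the conditional-expectation/mini-batch variance bounds --- match the paper's argument exactly.
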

The quantity in (\ref{eqn:variance}) illustrates the advantage of the extra-gradient descent, i.e., it is able to produce variance-dependent upper bound when applied to stochastic optimization. Because of mini-batches, the expectations of $\|\gb_t^k - \g_t^k\|^2$  and $\|\fb_t^k - \f_t^k\|^2$ are smaller than $G^2/B^k$, which leads to the tight upper bound in the second inequality.

Based on Lemma~\ref{lem:inner:loop}, we get the following lemma that bounds the expected excess risk in each epoch.
\begin{lemma} \label{lem:outer:loop}
Define
\[
\Delta_k =F(\w_1^k)-F(\w_*) .
\]
Set the parameters $\eta = 1/[\sqrt{6}L]$, $M=4/[\eta \lambda]$ and $B^1=12 \eta \lambda$ in Algorithm~\ref{alg:3}. For any $k$, we have
\[
\E[\Delta_k] \leq V_k = \frac{G^2}{\lambda 2^{k-2} }.
\]
\end{lemma}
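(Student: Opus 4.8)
The plan is to prove the bound $\E[\Delta_k] \leq V_k = G^2/(\lambda 2^{k-2})$ by induction on $k$, using Lemma~\ref{lem:inner:loop} to relate the expected excess risk at the start of epoch $k+1$ to that at the start of epoch $k$. The key observation is that the returned point $\w_1^{k+1} = \frac{1}{M}\sum_{t=1}^M \z_t^k$ is exactly the averaged auxiliary sequence whose excess risk is controlled by the second (conditional-expectation) inequality of Lemma~\ref{lem:inner:loop}. Taking full expectations there and using $\|\w_1^k - \w_*\|^2 \leq \frac{2}{\lambda}(F(\w_1^k) - F(\w_*)) = \frac{2}{\lambda}\Delta_k$ from the strong convexity inequality~\eqref{eqn:1}, we get
\[
\E[\Delta_{k+1}] \leq \frac{1}{\lambda M \eta}\E[\Delta_k] + \frac{6\eta G^2}{B^k}.
\]

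First I would substitute the prescribed parameters. With $M = 4/(\eta\lambda)$ the coefficient $\frac{1}{\lambda M \eta}$ becomes $\frac{1}{4}$, so the recursion reads $\E[\Delta_{k+1}] \leq \frac{1}{4}\E[\Delta_k] + \frac{6\eta G^2}{B^k}$. Since $B^1 = 12\eta\lambda$ and $B^k = 2^{k-1}B^1 = 12\eta\lambda\, 2^{k-1}$, the additive term is $\frac{6\eta G^2}{12\eta\lambda\,2^{k-1}} = \frac{G^2}{2\lambda\,2^{k-1}} = \frac{G^2}{\lambda\,2^{k}}$. For the base case $k=1$: $V_1 = G^2/(\lambda/2) = 2G^2/\lambda$, and by~\eqref{eqn:1} we have $\Delta_1 = F(\w_1^1)-F(\w_*) \leq 2G^2/\lambda$ deterministically, so $\E[\Delta_1] \leq V_1$. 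For the inductive step, assuming $\E[\Delta_k] \leq V_k = G^2/(\lambda 2^{k-2})$, the recursion gives
\[
\E[\Delta_{k+1}] \leq \frac{1}{4}\cdot\frac{G^2}{\lambda 2^{k-2}} + \frac{G^2}{\lambda 2^{k}} = \frac{G^2}{\lambda 2^{k}} + \frac{G^2}{\lambda 2^{k}} = \frac{G^2}{\lambda 2^{k-1}} = V_{k+1},
\]
which closes the induction.

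The main obstacle is not the arithmetic but making sure the conditioning in Lemma~\ref{lem:inner:loop} is handled correctly when passing to full expectation: the inner-loop bound holds for the conditional expectation $\E_{k-1}[\cdot]$ given all randomness through epoch $k-1$, and $\w_1^k$ is measurable with respect to that $\sigma$-field, so applying the tower property is legitimate — but one must be careful that the strong-convexity bound $\|\w_1^k-\w_*\|^2 \leq \frac{2}{\lambda}\Delta_k$ is itself a pointwise (hence conditionally valid) inequality, so that $\E_{k-1}[\|\w_1^k-\w_*\|^2] = \|\w_1^k-\w_*\|^2 \leq \frac{2}{\lambda}\Delta_k$ and only then do we take $\E[\cdot]$ on both sides. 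The rest is a routine geometric-series induction; I expect no surprises there.
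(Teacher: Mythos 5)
Your proposal is correct and follows essentially the same route as the paper's own proof: induction on $k$ with base case from~(\ref{eqn:1}), the conditional-expectation bound of Lemma~\ref{lem:inner:loop} combined with the strong-convexity inequality $\|\w_1^k-\w_*\|^2 \leq \frac{2}{\lambda}\Delta_k$ to obtain the recursion $\E[\Delta_{k+1}] \leq \frac{1}{4}\E[\Delta_k] + \frac{6\eta G^2}{B^k}$, and the same arithmetic with the doubling batch size to close the induction. Your explicit remark on the measurability of $\w_1^k$ with respect to the epoch-$(k-1)$ filtration is a correct (and slightly more careful) articulation of the tower-property step the paper performs implicitly.
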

\begin{proof}
It is straightforward to check that
\begin{equation} \label{eqn:bk}
B^k= 12 \eta \lambda  2^{k-1} =  \frac{ 24 \eta  G^2}{ V_k}.
\end{equation}

We prove this lemma by induction on $k$. When $k=1$, we know that
\[
\Delta_1 = F(\w_1^1) - F(\w_*) \overset{\text{(\ref{eqn:1})}}{\leq} \frac{2G^2}{\lambda } = \frac{G^2}{\lambda  2^{1-2}}=V_1.
\]
Assume that $\E[\Delta_k] \leq V_k$ for some $k \geq 1$, and we prove the inequality for $k+1$. From Lemma~\ref{lem:inner:loop}, we have
\[
 \E_{k-1}\left[ F\left(\w_1^{k+1}\right) \right] -F(\w_*) \leq  \frac{\|\w_{1}^k-\w_*\|^2}{2 M \eta} + \frac{6 \eta G^2 }{B^k}.
\]
Thus
\begin{eqnarray*}
& & \E \left[ F\left(\w_1^{k+1}\right) \right] -F(\w_*) \\
&\leq  &  \frac{\E[ \|\w_{1}^k-\w_*\|^2]}{2 M \eta} + \frac{6 \eta G^2 }{B^k} \\
&\overset{\text{(\ref{eqn:1})}}{\leq} & \frac{\E[ 2 (F(\w_{1}^k)- F(\w_*))/\lambda]}{2 M \eta} + \frac{6 \eta G^2 }{B^k} \\
&\overset{\text{(\ref{eqn:bk})}}{=}  & \frac{\E[ \Delta_k ]}{ M \eta \lambda} + \frac{ V_k}{4} \leq \frac{V_k}{ 4} + \frac{V_k}{4 } =V_{k+1}.
\end{eqnarray*}
\end{proof}

We are now at the position to prove Theorem~\ref{thm:1}.
\begin{proof}[Proof of Theorem~\ref{thm:1}]
From the stopping criterion of the outer loop in Algorithm~\ref{alg:3}, we know that the number of the epochs is given by the largest value of $k$ such that
\[
2 M \sum_{i=1}^k B^i \leq T.
\]
Since
\[
2 M \sum_{i=1}^k B^i  = 24 M \eta \lambda \sum_{i=1}^k 2^{i-1}=96 (2^k-1),
\]
the final epoch is given by
\[
k^\dag= \left \lfloor \log_2 \left( \frac{T}{96}+1\right) \right  \rfloor,
\]
and the final output is $\w_1^{k^\dag+1}$. From Lemma~\ref{lem:outer:loop}, we have
\[
\E[F(\w_1^{k^\dag+1})]-F(\w_*) \leq  V_{k^\dag+1} = \frac{G^2}{2^{k^\dag-1}\lambda} \leq \frac{384  G^2}{ \lambda T},
\]
where we use the fact
\[
2^{k^\dag} \geq \frac{1}{2} \left(\frac{ T}{96}+1 \right)\geq \frac{ T}{192}.
\]
The total number of projections is
\[
2 M k^\dag = \frac{8 \sqrt{6} L}{\lambda} \left \lfloor \log_2\left( \frac{T}{96}+1\right) \right \rfloor.
\]

\end{proof}
\subsection{Proof of Theorem~\ref{thm:2}}
Compared to the proof of Theorem~\ref{thm:1}, the main difference here is that we need a high probability version of Lemma~\ref{lem:inner:loop}. Specifically, we need to provide high probability bounds for the quantities in (\ref{eqn:variance}) and (\ref{eqn:martingle}).

To bound the variances given in (\ref{eqn:variance}), we need the following norm concentration inequality in Hilbert Space \citep{smale-2009-geometry}.
\begin{lemma} \label{lem:norm:hilbert}
Let $\H$ be a Hilbert Space and let $\xi$ be a random variable on $(\Z, \rho)$ with values in $\H$. Assume $\|\xi\| \leq B < \infty$ almost surely. Let $\{\xi_i\}_{i=1}^m$ be independent random drawers of $\rho$. For any $0 < \delta <1$, with a probability at least $1-\delta$,
\[
\left \| \frac{1}{m} \sum_{i=1}^m (\xi_i - \E [\xi_i] ) \right \| \leq  \frac{4B}{\sqrt{m}} \log \frac{2}{\delta}.
\]
\end{lemma}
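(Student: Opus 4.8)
The plan is to treat this as a Hoeffding-type concentration bound for a sum of independent, bounded, Hilbert-space-valued random variables and to reduce it to a one-dimensional tail estimate through an exponential moment bound. First I would center the summands: put $Y_i = \xi_i - \E[\xi_i]$, so that $\E[Y_i]=0$ and, since $\|\xi_i\|\le B$ a.s., also $\|Y_i\| \le 2B$ a.s.; write $S_m=\sum_{i=1}^m Y_i$, so the target becomes a tail bound on the scalar quantity $\|S_m\|$.

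The crux is the exponential moment estimate
\[
\E\bigl[\exp(\lambda\|S_m\|)\bigr] \;\le\; 2\exp\Bigl(\tfrac12\lambda^2\sum_{i=1}^m c_i^2\Bigr),\qquad c_i=2B,\ \lambda>0.
\]
This is the step where the geometry of $\H$ is used: since a Hilbert space is $2$-uniformly smooth, the identity $\|S_{n-1}+Y_n\|^2 = \|S_{n-1}\|^2 + 2\langle S_{n-1},Y_n\rangle + \|Y_n\|^2$ together with $\E[Y_n\mid \F_{n-1}]=0$ and $\|Y_n\|\le c_n$ lets one control $\E[\cosh(\lambda\|S_n\|)\mid\F_{n-1}]$ by $\cosh(\lambda\|S_{n-1}\|)\exp(\tfrac12\lambda^2 c_n^2)$; iterating for $n=1,\dots,m$ and using $\exp(\lambda u)\le 2\cosh(\lambda u)$ for $u\ge 0$ gives the display. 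This is the Pinelis/Yurinsky martingale inequality specialized to smoothness constant $1$, and I expect it to be the main obstacle, since it rests on the smoothness argument rather than a direct computation; everything else is routine bookkeeping.

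Granting the exponential moment bound, Markov's inequality yields $\Pr(\|S_m\|\ge t)\le 2\exp\bigl(\tfrac12\lambda^2\sum_i c_i^2 - \lambda t\bigr)$ for every $\lambda>0$, and choosing $\lambda = t/\sum_i c_i^2$ gives $\Pr(\|S_m\|\ge t)\le 2\exp\bigl(-t^2/(2\sum_i c_i^2)\bigr)=2\exp\bigl(-t^2/(8mB^2)\bigr)$. Equating the right-hand side with $\delta$ and solving yields $t=2\sqrt2\,B\sqrt{m\log(2/\delta)}$, so with probability at least $1-\delta$,
\[
\Bigl\|\tfrac1m\sum_{i=1}^m\bigl(\xi_i-\E[\xi_i]\bigr)\Bigr\| \;\le\; \frac{2\sqrt2\,B}{\sqrt m}\sqrt{\log\tfrac2\delta}.
\]
Finally, since $0<\delta<1$ forces $\log(2/\delta) > \log 2 > 1/2$, we have $2\sqrt2\,\sqrt{\log(2/\delta)}\le 4\log(2/\delta)$, which upgrades the bound to the claimed $\tfrac{4B}{\sqrt m}\log\tfrac2\delta$. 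If one prefers to avoid the Pinelis inequality, an alternative is to bound the even moments $\E\|S_m\|^{2q}$ by a Rosenthal/Khintchine-type recursion (using that $\E\|S_m\|^2=\sum_i\E\|Y_i\|^2$ exactly in a Hilbert space) and then sum the resulting exponential series; this route is more elementary but computationally heavier.
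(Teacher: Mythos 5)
Your proof is correct, but it is worth noting that the paper does not actually prove this lemma at all: it is quoted verbatim from Smale and Zhou (2009), who in turn obtain it from a Pinelis-type Bennett/Bernstein inequality for Hilbert-space-valued random variables (a bound of the form $\frac{2B\log(2/\delta)}{m}+\sqrt{\frac{2\sigma^2\log(2/\delta)}{m}}$, coarsened using $\sigma\le B$). You instead give a self-contained Hoeffding-type derivation from Pinelis' dimension-free exponential-moment inequality in $2$-smooth spaces, which yields $\Pr(\|S_m\|\ge t)\le 2\exp(-t^2/(8mB^2))$ and hence the bound $\frac{2\sqrt{2}B}{\sqrt{m}}\sqrt{\log(2/\delta)}$; your final arithmetic is right, since $0<\delta<1$ gives $\log(2/\delta)>\log 2>1/2$ and therefore $2\sqrt{2}\sqrt{\log(2/\delta)}\le 4\log(2/\delta)$. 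The only soft spot is your one-line justification of the key step $\E[\cosh(\lambda\|S_n\|)\mid\mathcal{F}_{n-1}]\le\cosh(\lambda\|S_{n-1}\|)\exp(\frac{1}{2}\lambda^2c_n^2)$: this does not follow directly from expanding $\|S_{n-1}+Y_n\|^2$ and taking conditional expectations (the norm sits inside a convex function of its square root, so the cross term does not simply vanish); Pinelis' actual argument controls the second derivative of $t\mapsto\E\cosh(\lambda\|S_{n-1}+tY_n\|)$. The inequality you invoke is nonetheless a correct, standard theorem, so treating it as the imported black box -- exactly as the paper treats the whole lemma -- is legitimate. Your route sacrifices the variance term that the Bennett-type argument retains, but since the lemma only claims the $B/\sqrt{m}$ rate with $\sigma$ bounded by $B$, nothing is lost for the purposes of this paper.
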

Based on Lemma~\ref{lem:norm:hilbert}, it is straightforward to prove the following lemma.
\begin{lemma} \label{lem:variance}
With a probability at least $1-\tilde{\delta}/2$, we have
\begin{equation} \label{eqn:smooth:grad:1}
\|\gb_t^k-  \g_t^k \| \leq \frac{4G}{\sqrt{B^k}} \log \frac{4 M}{\tilde{\delta}}, \ \forall \ t=1,\ldots, M.
\end{equation}
Similarly, with a probability at least $1-\tilde{\delta}/4$, we have
\begin{equation} \label{eqn:smooth:grad:2}
\|\fb_t^k -  \f_t^k \| \leq \frac{4G}{\sqrt{B^k}} \log \frac{8 M}{\tilde{\delta}}, \ \forall \ t=1,\ldots, M.
\end{equation}
\end{lemma}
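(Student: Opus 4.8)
The plan is to apply the Hilbert-space norm concentration inequality (Lemma~\ref{lem:norm:hilbert}) separately for each iteration index $t$, conditioning on the point at which the gradient is evaluated, and then close with a union bound over $t=1,\dots,M$. Fix an epoch $k$ and an iteration $t$, and condition on all the randomness generated before the $B^k$ oracle calls made at Step~6, so that $\w_t^k$ is a fixed point. Then $\gh(\w_t^k,1),\dots,\gh(\w_t^k,B^k)$ are independent, each has mean $\nabla F(\w_t^k)=\g_t^k$, and each has norm at most $G$. Applying Lemma~\ref{lem:norm:hilbert} with $\xi_i=\gh(\w_t^k,i)$, $m=B^k$, $B=G$, and confidence parameter $\tilde{\delta}/(2M)$ yields that, with probability at least $1-\tilde{\delta}/(2M)$,
\[
\|\gb_t^k-\g_t^k\| = \left\| \frac{1}{B^k}\sum_{i=1}^{B^k}\big(\gh(\w_t^k,i)-\g_t^k\big) \right\| \le \frac{4G}{\sqrt{B^k}}\log\frac{4M}{\tilde{\delta}},
\]
since $\log\frac{2}{\tilde{\delta}/(2M)}=\log\frac{4M}{\tilde{\delta}}$. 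Because this holds for every realization of $\w_t^k$, it holds unconditionally as well.

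Next I would take a union bound over $t=1,\dots,M$: each of the $M$ events fails with probability at most $\tilde{\delta}/(2M)$, so with probability at least $1-\tilde{\delta}/2$ the bound (\ref{eqn:smooth:grad:1}) holds simultaneously for all $t$. The second claim (\ref{eqn:smooth:grad:2}) follows by repeating the argument with $\z_t^k$ in place of $\w_t^k$: condition on the randomness up to and including Step~7 (which determines $\z_t^k$), apply Lemma~\ref{lem:norm:hilbert} to the independent samples $\gh(\z_t^k,i)$, $i=1,\dots,B^k$, now with confidence parameter $\tilde{\delta}/(4M)$ so that $\log\frac{2}{\tilde{\delta}/(4M)}=\log\frac{8M}{\tilde{\delta}}$, and union-bound over $t$ to collect a total failure probability of at most $\tilde{\delta}/4$.

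The only point requiring care is the dependence structure within an epoch: $\w_{t+1}^k$, and hence later evaluation points, depend on the fresh gradient samples drawn at step $t$, so one cannot treat all $M B^k$ samples of an epoch as a single i.i.d. block and apply the inequality once. The resolution is precisely the conditioning used above — the $B^k$ oracle calls at Step~6 (resp.\ Step~8) are independent of everything that determines $\w_t^k$ (resp.\ $\z_t^k$), so the conditional concentration bound is legitimate and, holding for every value of the conditioning variable, passes to an unconditional bound; the union bound over $t$ then costs only a factor $M$ in the confidence parameter. Everything else is a direct substitution into Lemma~\ref{lem:norm:hilbert}, so I expect no real obstacle beyond stating this conditioning argument cleanly.
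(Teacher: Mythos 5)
Your proof is correct and follows essentially the same route as the paper's: apply Lemma~\ref{lem:norm:hilbert} to the $B^k$ independent oracle calls at each iteration with per-iteration confidence $\tilde{\delta}/(2M)$ (resp.\ $\tilde{\delta}/(4M)$), then union-bound over $t=1,\dots,M$. The only difference is that you make explicit the conditioning on $\w_t^k$ (resp.\ $\z_t^k$) needed to justify the independence of the samples within a batch, a point the paper's proof leaves implicit.
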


We define the Martingale difference sequence:
\[
Z_t^k= \langle \f_t^k-\fb_t^k, \z_t^k - \w_*  \rangle.
\]
In order to bound the summation of $Z_t^k$ in~(\ref{eqn:martingle}), we make use of the Berstein inequality for martingales~\citep{bianchi-2006-prediction} and the peeling technique described in~\citep{Local_RC}, leading to the following Lemma.
\begin{lemma}\label{lem:3}
We use $E_1$ to denote the event that all the inequalities in (\ref{eqn:smooth:grad:2}) hold. On event $E_1$, with a probability at least $1 - \tilde{\delta}/4$, we have
\begin{eqnarray*}
\sum_{t=1}^M Z_t^k \leq  \frac{4G^2 \eta M}{B^k} \log^2 \frac{8 M}{\tilde{\delta}}  + \frac{G^2}{\lambda B^k } \left[ 1  +  64 \log^2 \frac{8 M}{\tilde{\delta}}  \left( \log \frac{4n}{\tilde{\delta}} + \frac{4}{9} \log^2 \frac{4n}{\tilde{\delta}} \right)\right]    +\frac{\lambda}{2} \sum_{t=1}^M \|\z_t^k - \w_*\|^2,
\end{eqnarray*}
where
\begin{equation} \label{eqn:lemma:n}
n = \left \lceil \log_2 \frac{4MB^k }{\eta \lambda} \right \rceil.
\end{equation}
\end{lemma}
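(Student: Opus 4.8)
The plan is to bound the martingale sum $\sum_{t=1}^M Z_t^k$ by combining a Bernstein-type inequality for martingales with the peeling (stratification) technique, where the "parameter" being peeled is the running quadratic variation. First I would observe that, conditioned on the randomness up to iteration $t-1$ within the epoch, $\E[Z_t^k \mid \cdot] = 0$ (since $\E[\fb_t^k] = \f_t^k$ given $\z_t^k$), so $\{Z_t^k\}_{t=1}^M$ is indeed a martingale difference sequence. Next I would establish two control parameters: (i) a uniform bound $|Z_t^k| \le \|\f_t^k - \fb_t^k\| \cdot \|\z_t^k - \w_*\|$, where the first factor is at most $\tfrac{4G}{\sqrt{B^k}}\log\tfrac{8M}{\tilde\delta}$ on the event $E_1$; and (ii) a conditional-variance bound $\sum_t \E_{t-1}[(Z_t^k)^2] \le \tfrac{4G^2}{B^k}\sum_t \|\z_t^k - \w_*\|^2$ (using that each coordinate of $\fb_t^k$ has variance controlled by $G^2/B^k$, hence $\E\|\fb_t^k - \f_t^k\|^2 \le G^2/B^k$ — actually $4G^2/B^k$ with the constants from Lemma~\ref{lem:norm:hilbert}-style arguments). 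The crucial difficulty is that the quantity $S := \sum_t \|\z_t^k - \w_*\|^2$ that appears in the variance bound is itself random and a priori unbounded — we cannot plug a worst-case value into Bernstein without destroying the rate.

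This is exactly where peeling enters. I would partition the range of $S$ dyadically: let $S_0$ be a crude deterministic upper bound on $S$ (obtained from~(\ref{eqn:1}), which gives $\|\z_t^k - \w_*\|^2 \le 4G^2/\lambda^2$, so $S \le 4MG^2/\lambda^2$, and correspondingly a lower threshold of order $\eta\lambda \cdot (\text{something})$ below which the contribution is negligible), and consider the events $\{2^{j-1}\tau < S \le 2^j \tau\}$ for $j = 1,\dots,n$ with $n = \lceil\log_2(4MB^k/(\eta\lambda))\rceil$ as in~(\ref{eqn:lemma:n}). On each stratum $j$, the conditional variance is at most $4G^2 2^j\tau / B^k$, a fixed number, so Bernstein's inequality for martingales applies and yields, with probability at least $1 - \tilde\delta/(4n)$,
\[
\sum_{t=1}^M Z_t^k \le \sqrt{\frac{8G^2 2^j\tau}{B^k}\log\frac{4n}{\tilde\delta}} + \frac{2}{3}\cdot\frac{4G}{\sqrt{B^k}}\log\frac{8M}{\tilde\delta}\cdot\log\frac{4n}{\tilde\delta}.
\]
Then I would bound $2^j\tau \le 2S$ on that stratum, apply the AM-GM inequality $\sqrt{ab} \le \tfrac{\lambda}{2}\cdot a/(\text{const}) + (\text{const})\cdot b/\lambda$ to the square-root term to absorb a $\tfrac{\lambda}{2}\sum_t\|\z_t^k-\w_*\|^2 = \tfrac{\lambda}{2}S$ piece (this is what produces the last term of the claimed bound), and collect the leftover deterministic terms — the $\log^2$ factors from squaring the Bernstein quantities give the $\log^2\tfrac{8M}{\tilde\delta}$ and $\log^2\tfrac{4n}{\tilde\delta}$ contributions, and the additive "$1$" comes from handling the bottom stratum $S \le \tau$ where $S$ is too small for the dyadic scheme. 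A union bound over the $n$ strata costs a factor $n$ inside the logarithm (turning $\tilde\delta/4$ into $\tilde\delta/(4n)$), which is exactly why $n$ appears as $\log\tfrac{4n}{\tilde\delta}$ in the statement.

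I expect the main obstacle to be getting the constants and the absorption step to line up precisely: one must choose the base threshold $\tau$ and the split in the AM-GM step so that (a) the coefficient of $\sum_t\|\z_t^k-\w_*\|^2$ comes out exactly $\lambda/2$ (no larger, since this term is later cancelled against the $-\lambda/(2M)\sum_t\|\z_t^k-\w_*\|^2$ in Lemma~\ref{lem:inner:loop} when $M$ is chosen as $4/(\eta\lambda)$), and (b) the remaining $1/B^k$-scaled error matches the bracketed expression. A secondary technical point is verifying that $E_1$ (the event that~(\ref{eqn:smooth:grad:2}) holds) is measurable with respect to the right filtration so that conditioning on it does not spoil the martingale property of the truncated increments; the cleanest route is to note that the uniform bound on $\|\f_t^k-\fb_t^k\|$ is only used to control $|Z_t^k|$ in the Bernstein bound, and one can alternatively truncate $Z_t^k$ at the deterministic level $\tfrac{4G}{\sqrt{B^k}}\log\tfrac{8M}{\tilde\delta}\cdot\tfrac{2G}{\lambda}$ and show the truncation is lossless on $E_1$, keeping the martingale structure intact. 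The rest is bookkeeping.
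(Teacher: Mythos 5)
Your proposal follows essentially the same route as the paper: a Bernstein inequality for martingales combined with dyadic peeling over the random quantity $\sum_{t}\|\z_t^k-\w_*\|^2$ (with exactly $n$ strata and a separate bottom case, $\sum_{t}\|\z_t^k-\w_*\|^2 \leq \eta G^2/[\lambda B^k]$, that yields the additive $1$), followed by a Young/AM--GM absorption producing the $\tfrac{\lambda}{2}\sum_{t}\|\z_t^k-\w_*\|^2$ term and a union bound over the strata that puts $n$ inside the logarithm. The only cosmetic difference is that the paper bounds the conditional variance by $C^2\sum_{t}\|\z_t^k-\w_*\|^2$ with $C=\tfrac{4G}{\sqrt{B^k}}\log\tfrac{8M}{\tilde\delta}$ (the on-$E_1$ bound on $\|\fb_t^k-\f_t^k\|$), which is exactly where the $64\log^2\tfrac{8M}{\tilde\delta}$ prefactor of the bracketed term comes from, rather than the true variance $O(G^2/B^k)$ you suggest.
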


Substituting the results in Lemmas~\ref{lem:variance} and~\ref{lem:3} into Lemma~\ref{lem:inner:loop}, we obtain the lemma below.
\begin{lemma} \label{lem:10:smooth:2}
For any $0 < \tilde{\delta} < 1$, with a probability at least $1-\tilde{\delta}$, we have
\[
\begin{split}
F\left( \frac{1}{M}  \sum_{t=1}^M \z_t^k\right) -F(\w_*) \leq  &\frac{\|\w_{1}^k-\w_*\|^2}{2 M \eta}  +  \frac{100G^2\eta }{B^k} \log^2 \frac{8 M}{\tilde{\delta}}  \\
 & + \frac{G^2}{ \lambda B^k M} \left[ 1 + 64 \log^2 \frac{8 M}{\tilde{\delta}}  \left( \log \frac{4n}{\tilde{\delta}} + \frac{4}{9} \log^2 \frac{4n}{\tilde{\delta}} \right) \right],
\end{split}
\]
where $n$ is given in~(\ref{eqn:lemma:n}).
\end{lemma}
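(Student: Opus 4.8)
The plan is to derive Lemma~\ref{lem:10:smooth:2} by plugging the high-probability bounds of Lemmas~\ref{lem:variance} and~\ref{lem:3} into the deterministic inequality of Lemma~\ref{lem:inner:loop}, and then cancelling the negative strong-convexity term against the positive $\frac{\lambda}{2}\sum_t\|\z_t^k-\w_*\|^2$ term coming from Lemma~\ref{lem:3}. First I would fix the relevant events: let $E_0$ be the event that~(\ref{eqn:smooth:grad:1}) holds, $E_1$ the event that~(\ref{eqn:smooth:grad:2}) holds, and $E_2$ the event in Lemma~\ref{lem:3}. By Lemma~\ref{lem:variance}, $\Pro(E_0^c)\le\tilde\delta/2$ and $\Pro(E_1^c)\le\tilde\delta/4$, and by Lemma~\ref{lem:3}, $\Pro(E_2^c\mid E_1)\le\tilde\delta/4$; a union bound gives $\Pro(E_0\cap E_1\cap E_2)\ge 1-\tilde\delta$. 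All subsequent estimates are carried out on this event.

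Next I would bound the variance line~(\ref{eqn:variance}). On $E_0\cap E_1$, each summand $\|\gb_t^k-\g_t^k\|^2$ and $\|\fb_t^k-\f_t^k\|^2$ is at most $\frac{16G^2}{B^k}\log^2\frac{8M}{\tilde\delta}$ (using $\log\frac{4M}{\tilde\delta}\le\log\frac{8M}{\tilde\delta}$ to get a single uniform constant), so
\[
\frac{3\eta}{M}\left(\sum_{t=1}^M\|\gb_t^k-\g_t^k\|^2+\sum_{t=1}^M\|\fb_t^k-\f_t^k\|^2\right)\le \frac{3\eta}{M}\cdot 2M\cdot\frac{16G^2}{B^k}\log^2\frac{8M}{\tilde\delta}=\frac{96G^2\eta}{B^k}\log^2\frac{8M}{\tilde\delta}.
\]
For the martingale line~(\ref{eqn:martingle}), on $E_2$ I substitute the bound of Lemma~\ref{lem:3}: the term $\frac{4G^2\eta M}{B^k}\log^2\frac{8M}{\tilde\delta}$ contributes, after division by $M$, an extra $\frac{4G^2\eta}{B^k}\log^2\frac{8M}{\tilde\delta}$, which combines with the $96$ above to give the stated $\frac{100G^2\eta}{B^k}\log^2\frac{8M}{\tilde\delta}$; the bracketed term divided by $M$ gives exactly $\frac{G^2}{\lambda B^k M}\bigl[1+64\log^2\frac{8M}{\tilde\delta}(\log\frac{4n}{\tilde\delta}+\frac49\log^2\frac{4n}{\tilde\delta})\bigr]$; and the crucial $\frac{\lambda}{2}\sum_t\|\z_t^k-\w_*\|^2$ from Lemma~\ref{lem:3}, divided by $M$, cancels the $-\frac{\lambda}{2M}\sum_t\|\z_t^k-\w_*\|^2$ appearing in Lemma~\ref{lem:inner:loop}. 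What remains on the right-hand side is $\frac{\|\w_1^k-\w_*\|^2}{2M\eta}$ plus the two collected terms, which is precisely the claimed bound.

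The step I expect to require the most care is the bookkeeping of the failure probabilities and the constants: one must make sure that the event $E_2$ of Lemma~\ref{lem:3} is correctly conditioned on $E_1$ (so the $\tilde\delta/4+\tilde\delta/4+\tilde\delta/2$ budget really does close at $\tilde\delta$), and that the exact cancellation of the $\frac{\lambda}{2M}\sum_t\|\z_t^k-\w_*\|^2$ terms is legitimate — i.e. that Lemma~\ref{lem:3}'s right-hand side contributes a $+\frac{\lambda}{2}\sum$ that matches the $-\frac{\lambda}{2M}\sum$ after the $1/M$ normalization. One also needs to verify that $n$ as defined in~(\ref{eqn:lemma:n}) is the same $n$ that appears throughout, so no reindexing is needed. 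Apart from this accounting, everything else is a direct substitution, so the lemma follows.
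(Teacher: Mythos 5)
Your proposal is correct and follows exactly the route the paper intends (the paper itself only says ``substituting the results in Lemmas~\ref{lem:variance} and~\ref{lem:3} into Lemma~\ref{lem:inner:loop}''): the constant bookkeeping $3\cdot 2\cdot 16 = 96$ plus the $4$ from the martingale bound gives $100$, the bracketed term passes through after division by $M$, the $\frac{\lambda}{2M}\sum_t\|\z_t^k-\w_*\|^2$ terms cancel, and the failure budget $\tilde\delta/2+\tilde\delta/4+\tilde\delta/4=\tilde\delta$ closes. No gaps.
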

Based on Lemma~\ref{lem:10:smooth:2}, we provide a high probability version of Lemma~\ref{lem:outer:loop}, that bounds the excess risk in each epoch with a high probability.
\begin{lemma} \label{lem:11:smooth}
Set the parameters $\eta = 1/[\sqrt{6}L]$, $M=4/[\eta \lambda]$ and $B^1=\alpha \eta \lambda$ in Algorithm~\ref{alg:3}, where $\alpha$  is defined in (\ref{eqn:smooth:alpha:2}). For any $k$, with a probability at least $(1-\tilde{\delta})^{k-1}$, we have
\[
\Delta_k =F(\w_1^k)-F(\w_*) \leq V_k = \frac{G^2}{\lambda 2^{k-2} }.
\]
\end{lemma}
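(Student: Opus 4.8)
The plan is to prove Lemma~\ref{lem:11:smooth} by induction on $k$, mirroring the structure of the proof of Lemma~\ref{lem:outer:loop} but carrying along a high-probability conditioning at each epoch. The base case $k=1$ holds deterministically (with ``probability'' $(1-\tilde\delta)^0 = 1$) from the a priori bound~(\ref{eqn:1}), exactly as in Lemma~\ref{lem:outer:loop}. For the inductive step, suppose that with probability at least $(1-\tilde\delta)^{k-1}$ we have $\Delta_k \le V_k$. I would first record the analogue of~(\ref{eqn:bk}): with $B^1 = \alpha\eta\lambda$ and the doubling rule $B^{k+1}=2B^k$, one has $B^k = \alpha\eta\lambda\, 2^{k-1}$, and since $V_k = G^2/(\lambda 2^{k-2})$ this gives $B^k = (\alpha/4)\eta\lambda\cdot (G^2/(\lambda V_k))\cdot 4 = \dots$, i.e. $B^k = c\,\eta G^2 / V_k$ for an explicit constant determined by $\alpha$. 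This is the identity that converts the $1/B^k$ terms in Lemma~\ref{lem:10:smooth:2} into multiples of $V_k$.

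Next I would apply Lemma~\ref{lem:10:smooth:2} at epoch $k$: on an event of probability at least $1-\tilde\delta$ (the union of the events in Lemmas~\ref{lem:variance} and~\ref{lem:3}), and recalling $\w_1^{k+1} = \frac1M\sum_{t=1}^M \z_t^k$,
\[
\Delta_{k+1} \le \frac{\|\w_1^k - \w_*\|^2}{2M\eta} + \frac{100 G^2 \eta}{B^k}\log^2\frac{8M}{\tilde\delta} + \frac{G^2}{\lambda B^k M}\left[1 + 64\log^2\frac{8M}{\tilde\delta}\left(\log\frac{4n}{\tilde\delta} + \frac49\log^2\frac{4n}{\tilde\delta}\right)\right].
\]
On the inductive event, $\|\w_1^k-\w_*\|^2 \le 2\Delta_k/\lambda \le 2V_k/\lambda$ by~(\ref{eqn:1}), and with $M = 4/(\eta\lambda)$ the first term is at most $V_k/4$. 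For the remaining two terms I would substitute the definition of $\alpha$ in~(\ref{eqn:smooth:alpha:2}): $\alpha \ge 400\log^2(8M/\tilde\delta)$ handles the ``$100\eta/B^k$'' term (since $100\eta\log^2(8M/\tilde\delta)/B^k = 100\log^2(8M/\tilde\delta)\,V_k/(\alpha G^2)\cdot \dots$, which is $\le V_k/4$ once the constant is tracked), and $\alpha \ge 1 + 64\log^2(8M/\tilde\delta)(\log(4N/\tilde\delta) + \frac49\log^2(4N/\tilde\delta))$ handles the bracketed term, provided $n \le N$ so that $\log(4n/\tilde\delta) \le \log(4N/\tilde\delta)$. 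Checking $n \le N$ is a bookkeeping step: $n = \lceil\log_2(4MB^k/(\eta\lambda))\rceil$ with $B^k = \alpha\eta\lambda 2^{k-1} \le \alpha\eta\lambda\cdot 2^{k^\dag}$, and since the total oracle budget $T$ bounds $2M\sum_i B^i$, one has $B^k \le T/(2M)$ roughly, so $4MB^k/(\eta\lambda) \le 4MT/(\eta\lambda)$, giving $n \le \lceil\log_2(4MT/(\eta\lambda))\rceil = N$. Adding the three contributions gives $\Delta_{k+1} \le V_k/4 + V_k/4 + V_k/4 \le V_k/2 = V_{k+1}$ (some slack here, likely the constants in $\alpha$ are chosen so each piece is $\le V_k/4$ and the sum is $\le 3V_k/4 \le V_{k+1}$; I would arrange the split so the arithmetic closes).

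Finally, I would combine the probabilities: the inductive hypothesis holds with probability $\ge (1-\tilde\delta)^{k-1}$, and conditionally on it the epoch-$k$ event holds with probability $\ge 1-\tilde\delta$ (Lemma~\ref{lem:10:smooth:2} is stated conditionally, and Lemma~\ref{lem:3} is explicitly conditioned on event $E_1$), so by the chain rule the event $\{\Delta_{k+1}\le V_{k+1}\}$ has probability at least $(1-\tilde\delta)^{k-1}(1-\tilde\delta) = (1-\tilde\delta)^{k}$, completing the induction. The main obstacle I anticipate is not any single estimate but the careful propagation of the conditioning: Lemma~\ref{lem:10:smooth:2} bounds $\Delta_{k+1}$ in terms of $\|\w_1^k - \w_*\|$, which is itself random and only controlled on the inductive event, so one must be precise that the epoch-$k$ concentration events (which involve fresh, independent oracle calls) are independent of the randomness defining $\w_1^k$, and hence the $1-\tilde\delta$ probability applies conditionally on the inductive event. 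The other mildly delicate point is verifying $n \le N$ uniformly over all epochs $k \le k^\dag$ so that the single constant $\alpha$ defined via $N$ suffices for every epoch; this follows from the stopping rule but should be stated explicitly.
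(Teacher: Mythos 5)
Your proposal follows the paper's proof essentially verbatim: induction on $k$ with the base case from (\ref{eqn:1}), the identity $B^k = \alpha\eta\lambda 2^{k-1} = 2\alpha\eta G^2/V_k$, an application of Lemma~\ref{lem:10:smooth:2} on the inductive event to get the three-term bound, the observation $n \le N$ from the stopping rule, and the multiplicative combination of probabilities to reach $(1-\tilde\delta)^k$. The one correction needed is your final arithmetic: with $\alpha$ as defined in (\ref{eqn:smooth:alpha:2}), the second and third terms each come out to at most $V_k/8$ (not $V_k/4$), so the sum is $V_k/4 + V_k/8 + V_k/8 = V_k/2 = V_{k+1}$ exactly, which resolves the concern you flagged about $3V_k/4 \not\le V_k/2$.
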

\begin{proof}
We follow the logic used in the proof of Lemma~\ref{lem:outer:loop}.

It is straightforward to check that
\[
B^k= \alpha \eta \lambda  2^{k-1} =  \frac{ 2 \alpha \eta  G^2}{ V_k}.
\]

When $k=1$, with a probability $(1-\tilde{\delta})^{1-1}=1$, we have
\[
\Delta_1 = F(\w_1^1) - F(\w_*) \overset{\text{(\ref{eqn:1})}}{\leq}  \frac{2G^2}{\lambda } = \frac{G^2}{\lambda  2^{1-2}}=V_1.
\]
Assume that with a probability at least $(1-\tilde{\delta})^{k-1}$, $\Delta_k \leq V_k$ for some $k \geq 1$. We now prove the case for $k+1$. Notice that $N$ defined in (\ref{eqn:lemma:N}) is larger than $n$ defined in~(\ref{eqn:lemma:n}). From Lemma~\ref{lem:10:smooth:2}, with a probability at least $1-\tilde{\delta}$, we have
\[
\begin{split}
& \Delta_{k+1} =  F(\w_1^{k+1})-F(\w_*) \\
\leq &  \frac{\|\w_1^k-\w_*\|^2}{2 M \eta} +  \frac{100G^2\eta  }{B^k} \log^2 \frac{8 M}{\tilde{\delta}}  + \frac{G^2}{ \lambda B^k M} \left[ 1 + 64 \log^2 \frac{8 M}{\tilde{\delta}}  \left( \log \frac{4N}{\tilde{\delta}} + \frac{4}{9} \log^2 \frac{4N}{\tilde{\delta}} \right) \right]\\
\leq &  \frac{\Delta_k}{4 } +   \frac{400}{\alpha} \log^2 \frac{8 M}{\tilde{\delta}} \frac{V_k}{8}  +  \frac{1}{  \alpha } \left[ 1 + 64 \log^2 \frac{8 M}{\tilde{\delta}}  \left( \log \frac{4N}{\tilde{\delta}} + \frac{4}{9} \log^2 \frac{4N}{\tilde{\delta}} \right) \right] \frac{V_k}{  8  }. \\
\end{split}
\]

Using the definition of $\alpha$ in (\ref{eqn:smooth:alpha:2}), with a probability at least $(1-\tilde{\delta})^k$ we have,
\[
\Delta_{k+1}  \leq \frac{1}{4} V_k  + \frac{1}{8} V_k  + \frac{1}{8} V_k = \frac{1}{2} V_k = V_{k+1}.
\]
\end{proof}

Now, we provide the proof of Theorem~\ref{thm:2}.
\begin{proof}[Proof of Theorem~\ref{thm:2}]
The number of epochs made is given by the largest value of $k$ satisfying $2 M \sum_{i=1}^k B^i \leq T$. Since
\[
2 M \sum_{i=1}^k B^i =2 M \alpha \lambda \eta \sum_{i=1}^k  2^{i-1} =8 \alpha(2^k-1),
\]
$k^\dag$ defined in (\ref{eqn:smooth:k}) is the value of the final epoch, and the final output is $\w_1^{k^\dag+1}$. From Lemma~\ref{lem:11:smooth}, we have with a probability at least $(1-\tilde{\delta})^{k^\dag}$
\[
\begin{split}
& F(\w_1^{k^\dag+1})-F(\w_*) = \Delta_{k^\dag+1} \leq  V_{k^\dag+1} = \frac{G^2}{2^{k^\dag-1}\lambda} = \frac{2G^2}{2^{k^\dag}\lambda} \leq \frac{32 \alpha G^2}{ \lambda T},
\end{split}
\]
where we use the fact
\[
2^{k^\dag} \geq \frac{1}{2} \left(\frac{ T}{8\alpha}+1 \right)\geq \frac{ T}{16 \alpha}.
\]
We complete the proof by using the property that $(1-\frac{1}{x})^x$ is an increasing function when $x>1$, which implies
\[
\begin{split}
& (1-\tilde{\delta})^{k^\dag}=\left(1-\frac{\delta}{k^\dag}\right)^{k^\dag}=\left(\left(1-\frac{1}{k^\dag/\delta}\right)^{k^\dag/\delta}
\right)^{\delta} \geq  \left(\left(1-\frac{1}{1/\delta}\right)^{1/\delta}\right)^{\delta}=1-\delta.
\end{split}
\]
\end{proof}
\section{Experiments}
In this section, we present numerical experiments to support our theoretical analysis. We studied the following algorithms:
\begin{compactenum}
  \item $\log T$: the proposed algorithm that is optimal for SOS$^2$C but only needs $\log(T)$ projections;
  \item EP\_GD: the epoch gradient descent developed in~\citep{COLT:Hazan:2011}, which is also optimal for SOS$^2$C but needs $O(T)$ projections;
  \item SGD: the stochastic gradient descent with step size $\eta_t=1/(\lambda t)$~\citep{Shalev-ShwartzSSC11}, which achieves $O(\log T/T)$ rate of convergence for general SOS$^2$C and needs $O(T)$ projections.
\end{compactenum}
We first consider the a simple stochastic optimization problem adapted from~\citep{ICML2012Rakhlin}, which is both smooth and strongly convex. The objective function is $F(W)=\frac{1}{2} \|W\|_F^2$ and the domain is the $5 \times 5$ dimensional positive semidefinite (PSD) cone. The stochastic gradient oracle, given a point $W$, returns the stochastic gradient $W+Z$ where $Z$ is uniformly distributed in $[-1,1]^{5\times 5}$. Because of the noise matrix $Z$, all the immediate solutions are not PDS and we need to project them back to the PSD cone. To ensure the eigendecomposition only involving real numbers, we further require $Z$ to be symmetric. Notice that for this problem we know $W_* = \argmin_{W \succeq o} F(W)=0^{5\times 5}$. Since the gradient of $W_*$ is $0^{5\times 5}$, it can be shown that SGD also achieves the optimal $O(1/T)$ rate of convergence on this problem~\citep{ICML2012Rakhlin}.

Let $W_T$ be the solution returned after making $T$ calls to the gradient oracle. To verify if the proposed algorithm achieves an $O(1/T)$ convergence, we measure $(F(W_T)-F(W_*))\times T$ versus $T$, which is given in Fig.~\ref{fig:1:a}. We observe that when $T$ is sufficiently large, quantity $(F(W_T)-F(W_*))\times T$ essentially becomes a constant for all three algorithms, implying $O(1/T)$ convergence rates for all the algorithms. We also observe that the constant achieved by the proposed algorithm is slightly larger than the two competitors, which can be attributed to the term $(\log\log T)^4$ in our bound in Theorem~\ref{thm:2}. To demonstrate the advantage of our algorithm, we plot the value of the objective function versus the number of projections $P$ in Fig.~\ref{fig:1:b}. We observe that using our algorithm, the objective function is reduced significantly faster than other algorithms w.r.t.~the number of projections.
\begin{figure*}[t]
\centering
  \subfigure[$(F(\w_T)-F(\w_*))\times T$ versus $T$]{
    \label{fig:1:a} 
    \includegraphics[width=0.45\textwidth]{./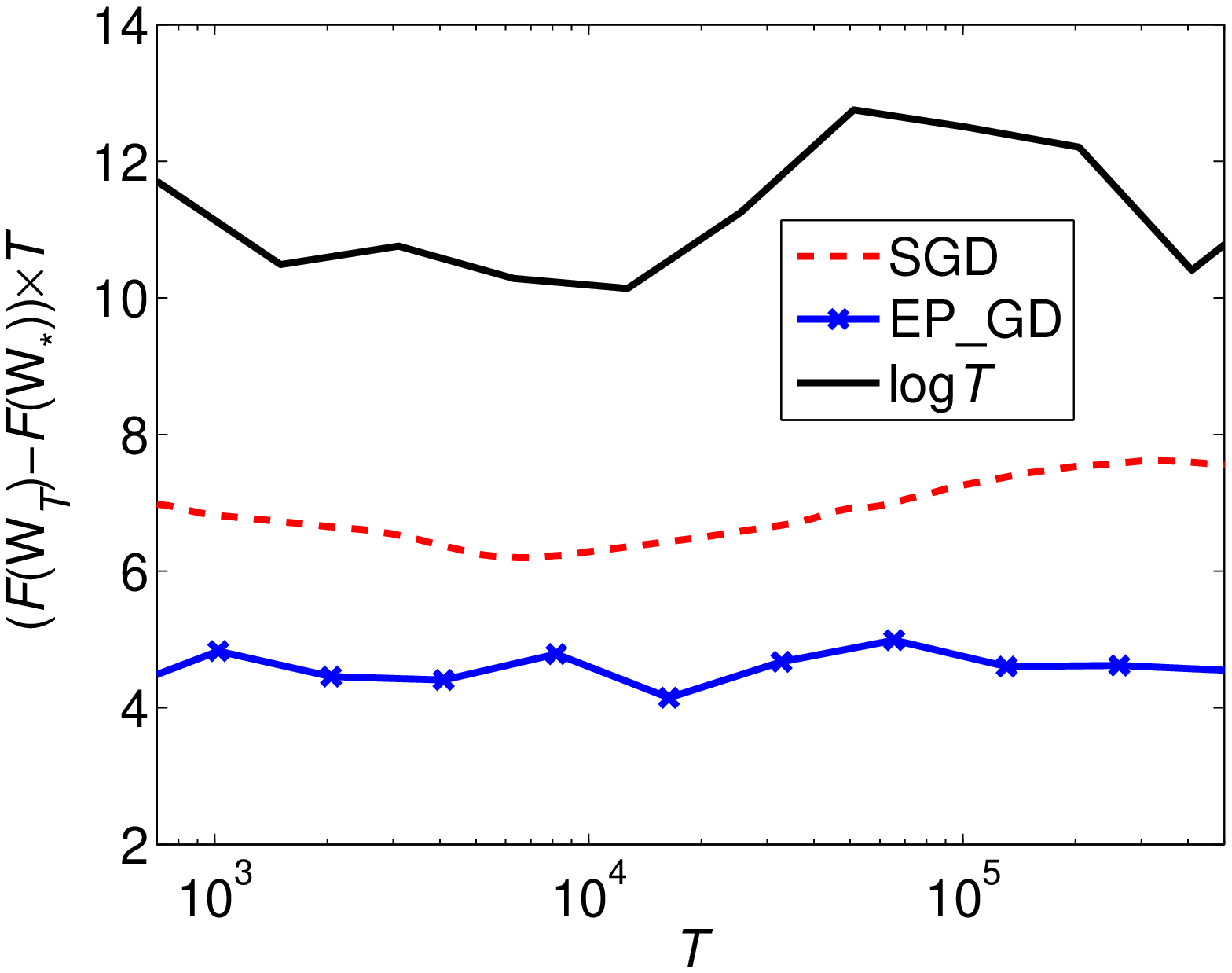}}
\subfigure[$F(\w_T)$ versus the number of projections $P$]{
    \label{fig:1:b} 
    \includegraphics[width=0.45\textwidth]{./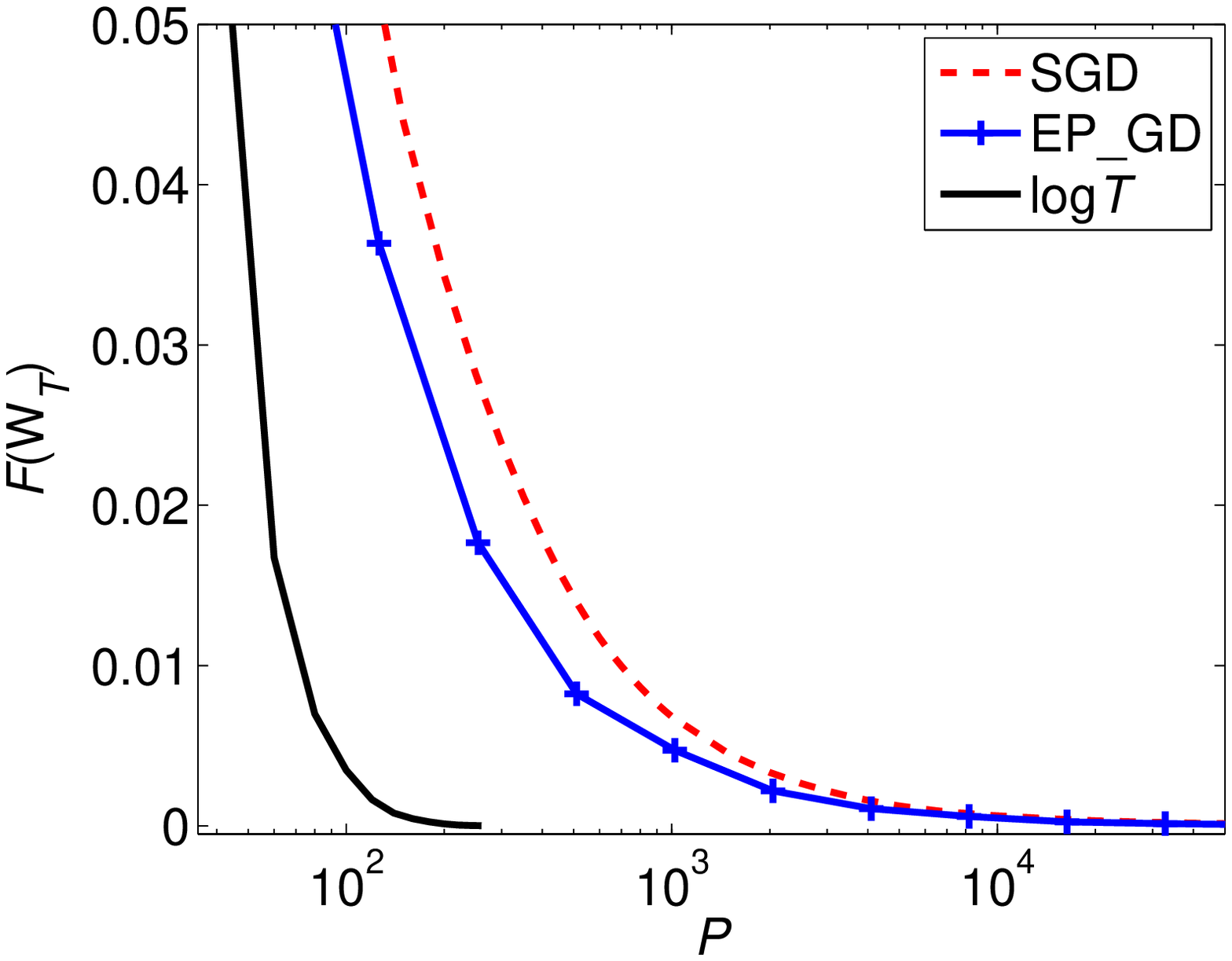}}
  \caption{Results for stochastic optimization of $F(W) = \frac{1}{2}\|W\|_F^2$ over the PSD cone. The experiments are repeated 10 times and the averages are reported.}
  \label{fig:1}
\end{figure*}

In the second experiment, we apply our algorithm to the regularized distance metric learning~\citep{NIPS2009_RDM}. The goal is to solve the following problem
\[
\min_{W \succeq 0} \E_{(\x_i,y_i),(\x_j,y_j)} [\ell( y_{ij} (1-\|\x_i-\x_j\|^2_M))] + \frac{\lambda}{2} \|W\|_F^2,
\]
where $\x_i$ is the instance, and $y_i$ is $\x_i$'s label, $y_{ij}$ is derived from labels $y_i$ and $y_j$ (i.e., $y_{ij}=1$ if $y_i = y_j$ and $-1$ otherwise), $\|\x\|^2_M=\x^\top M \x$, and $\ell(z)=\log(1 + \exp(-z))$ is the logit loss.
During the optimization process, the call to the gradient oracle corresponds to generate a training pair $\{(\x_i,y_i),(\x_j,y_j)\}$ randomly. To estimate the value of objective function, we evaluate the average empirical loss on $10^4$ testing pairs, which are also generated randomly. Fig.~\ref{fig:2} shows the value of the objective function versus the number of projections $P$. Again, this result validates that the proposed algorithm $\log T$ is able to reduce the number of projections dramatically without hurting the performance.
\begin{figure*}[t]
\centering
  \subfigure[Mushrooms]{
    \label{fig:2:a} 
    \includegraphics[width=0.45\textwidth]{./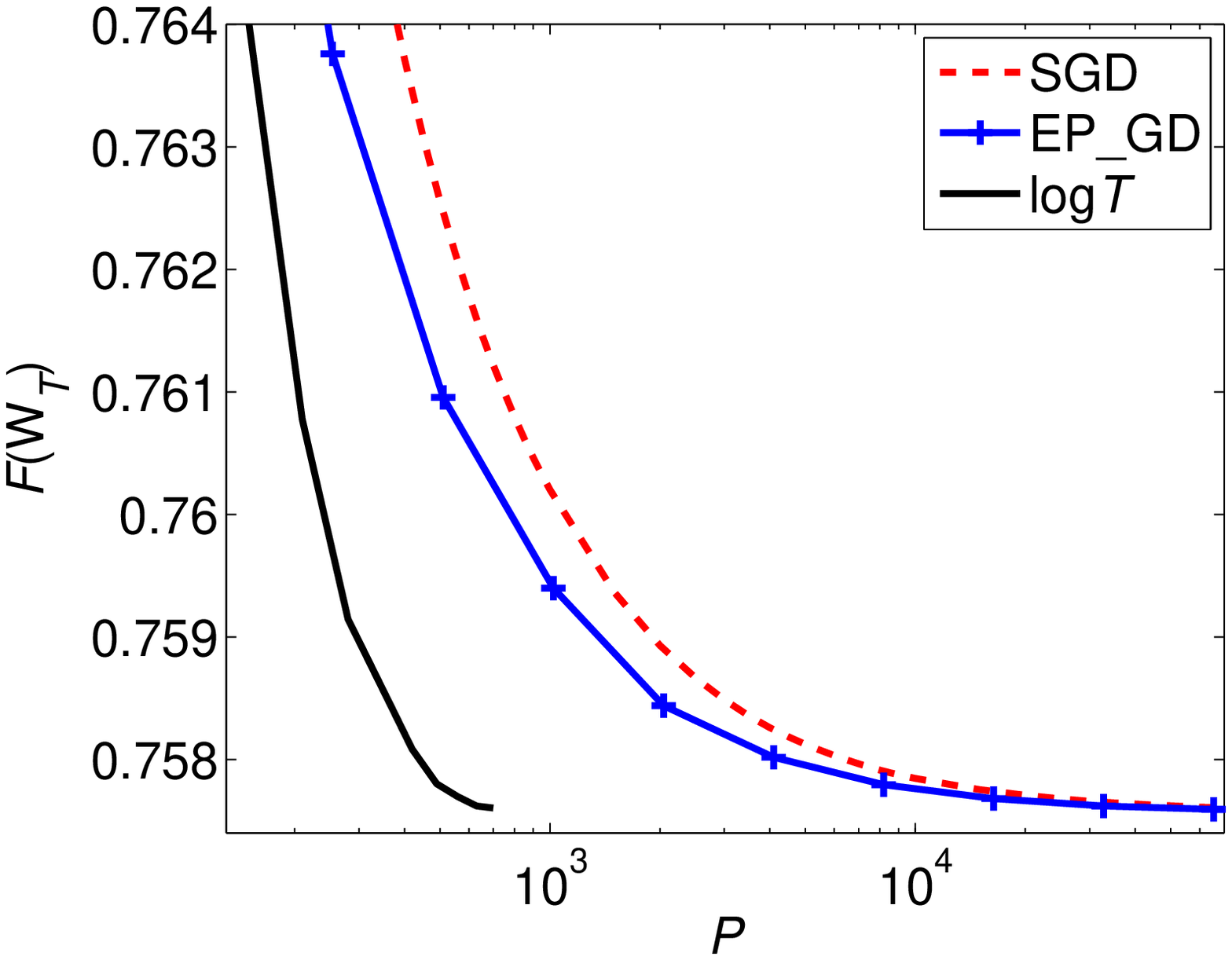}}
\subfigure[Adult]{
    \label{fig:2:b} 
    \includegraphics[width=0.45\textwidth]{./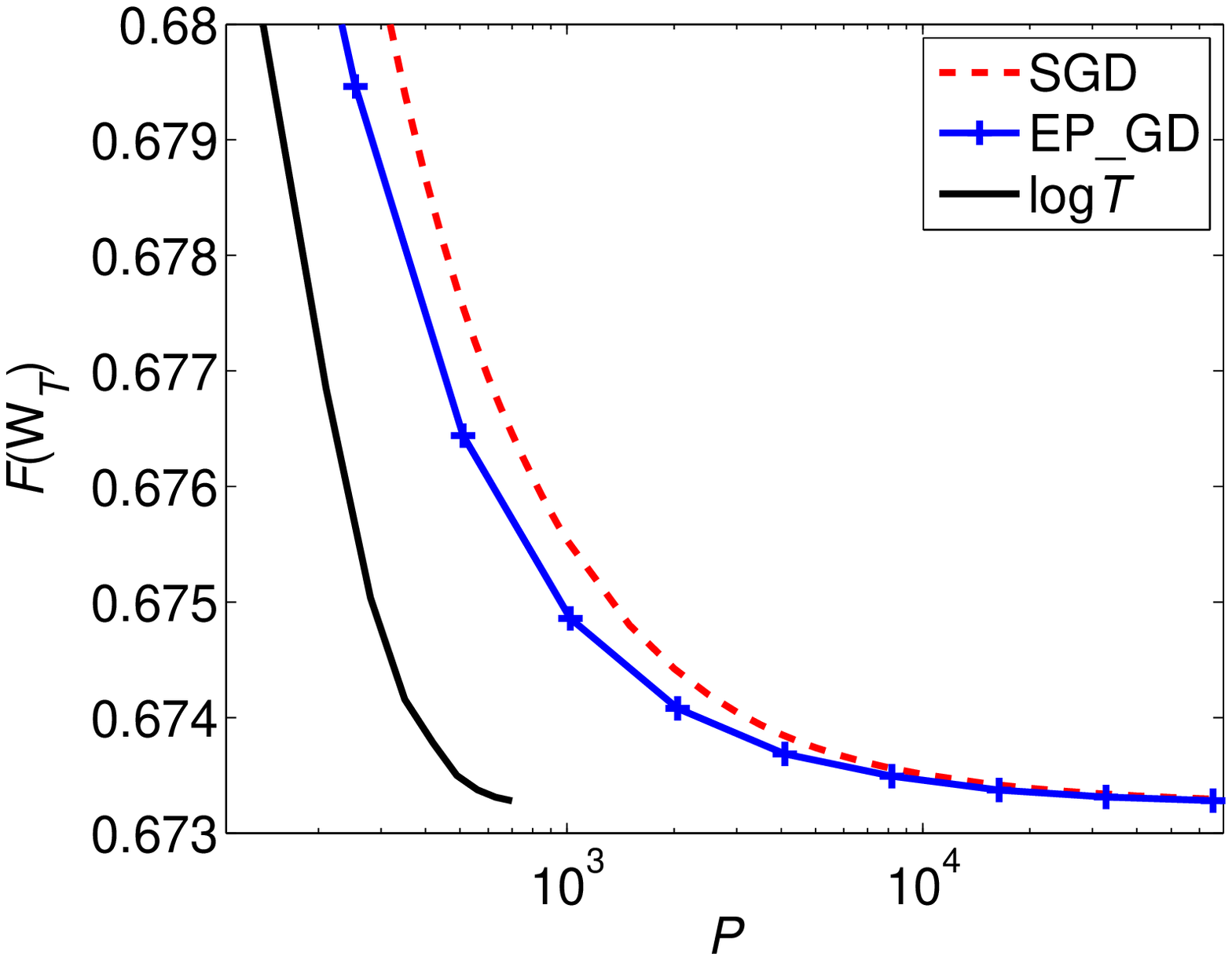}}
\caption{Results for the regularized distance metric learning on the Mushrooms and Adult data sets. $F(W_T)$ is measured on $10^4$ testing pairs and the horizontal axis $P$ measures the number of projections performed by each algorithm. The experiments are repeated 10 times and the averages are reported.}
  \label{fig:2}
\end{figure*}
\section{Conclusion}
In this paper, we study the problem of reducing the number of projections in stochastic optimization by exploring the property of smoothness. When the target function is smooth and strongly convex, we propose a novel algorithm that achieves the optimal $O(1/T)$ rate of convergence by only performing $O(\log T)$ projections.

An open question is how to extend our results to stochastic composite optimization~\citep{Lan:SCO}, where the objective function is a combination of non-smooth and smooth stochastic components. We plan to explore the composite gradient mapping technique, introduced in~\citep{Nesterov_Composite}, to see if we can achieve an $O(1/T)$ convergence rate with only $O(\log T)$ projections.

\bibliography{E:/MyPaper/ref}

\appendix
\section{Proof of Lemma~\ref{lem:inner:loop}}
We need the following lemma that characterizes the property of the extra-gradient descent.
\begin{lemma}[Lemma~3.1 in \citep{nemirovski-2005-prox}] \label{lem:nemirovski}
Let $\Z$ be a convex compact set in Euclidean space $\mathcal{E}$ with inner product $\langle \cdot, \cdot \rangle$, let $\| \cdot \|$ be a norm on $\mathcal{E}$ and $\|\cdot\|_*$ be its dual norm, and let $\omega(\z): \Z \mapsto \R$ be a $\alpha$-strongly convex function with respect to $\|\cdot\|$. The Bregman distance associated with $\omega$ for points $\z, \w \in \Z$ is defined as
\[
B_\omega(\z,\w)=\omega(\z)-\omega(\w)-\langle \z-\w, \nabla \omega(\w) \rangle.
\]
Let $\U$ be a convex and closed subset of $\Z$, and let $\z_- \in \Z$, let $\bm \xi, \bm \eta \in \mathcal{E}$, and let $\gamma >0$.  Consider the points
\begin{eqnarray*}
\w=\argmin_{\y \in \U} \{\langle \gamma \bm \xi - \nabla \omega(\z_-), \y \rangle + \omega(\y)\},\\
\z_+=\argmin_{\y \in \U} \{\langle \gamma \bm \eta - \nabla \omega(\z_-), \y \rangle + \omega(\y)\}.
\end{eqnarray*}
Then for all $\z \in \U$ one has
\[
 \langle \w - \z,   \gamma \bm \eta \rangle \leq  B_\omega(\z, \z_-)-B_\omega(\z,\z_+) + \frac{\gamma^2}{\alpha} \|\bm \eta -\bm \xi \|_*^2 - \frac{\alpha}{2} \{ \|\w - \z_- \|^2 + \| \z_+ - \w \|^2 \}.
\]
\end{lemma}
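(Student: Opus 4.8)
The plan is to derive the inequality purely from the first-order optimality conditions of the two minimization problems defining $\w$ and $\z_+$, glued together by the \emph{three-point identity} of the Bregman distance. The first observation is that, since
\[
B_\omega(\y,\z_-)=\omega(\y)-\omega(\z_-)-\langle \y-\z_-,\nabla\omega(\z_-)\rangle
\]
differs from $\omega(\y)-\langle\nabla\omega(\z_-),\y\rangle$ only by quantities that do not depend on $\y$, the two points are exactly Bregman prox-mappings from the common center $\z_-$:
\[
\w=\argmin_{\y\in\U}\bigl\{\langle\gamma\bm\xi,\y\rangle+B_\omega(\y,\z_-)\bigr\},\qquad \z_+=\argmin_{\y\in\U}\bigl\{\langle\gamma\bm\eta,\y\rangle+B_\omega(\y,\z_-)\bigr\}.
\]
This rewriting is what will make the Bregman terms telescope. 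Since $\U$ is convex and closed and $\omega$ is strongly convex (hence each objective is strongly convex with a unique minimizer), the minimizers are well defined and admit a variational inequality.

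The core step is the \emph{prox inequality}. For the minimizer $\w$ over the convex set $\U$ of the function $\y\mapsto\langle\gamma\bm\xi,\y\rangle+B_\omega(\y,\z_-)$, the first-order condition reads $\langle\gamma\bm\xi+\nabla\omega(\w)-\nabla\omega(\z_-),\,\z'-\w\rangle\ge 0$ for every $\z'\in\U$. I would then invoke the algebraic identity
\[
\langle\nabla\omega(\w)-\nabla\omega(\z_-),\,\z'-\w\rangle=B_\omega(\z',\z_-)-B_\omega(\z',\w)-B_\omega(\w,\z_-),
\]
which is verified by expanding all three Bregman terms and cancelling the $\omega$ values and the $\nabla\omega(\z_-)$ contributions. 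Combining the two displays gives, for every $\z'\in\U$,
\[
\langle\gamma\bm\xi,\w-\z'\rangle\le B_\omega(\z',\z_-)-B_\omega(\z',\w)-B_\omega(\w,\z_-),
\]
and the identical argument applied to $\z_+$ with driver $\gamma\bm\eta$ yields $\langle\gamma\bm\eta,\z_+-\z\rangle\le B_\omega(\z,\z_-)-B_\omega(\z,\z_+)-B_\omega(\z_+,\z_-)$.

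I would then apply the prox inequality for $\w$ at the test point $\z'=\z_+$ and the prox inequality for $\z_+$ at the test point $\z$, and add them after the splitting
\[
\langle\gamma\bm\eta,\w-\z\rangle=\langle\gamma\bm\eta,\z_+-\z\rangle+\langle\gamma\bm\xi,\w-\z_+\rangle+\langle\gamma(\bm\eta-\bm\xi),\w-\z_+\rangle.
\]
The two $B_\omega(\z_+,\z_-)$ contributions cancel, leaving
\[
\langle\gamma\bm\eta,\w-\z\rangle\le B_\omega(\z,\z_-)-B_\omega(\z,\z_+)-B_\omega(\z_+,\w)-B_\omega(\w,\z_-)+\langle\gamma(\bm\eta-\bm\xi),\w-\z_+\rangle.
\]
The first two terms are precisely the Bregman terms in the target, so it remains to control the residual $\langle\gamma(\bm\eta-\bm\xi),\w-\z_+\rangle$ against the two negative Bregman terms.

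The final step uses strong convexity, $B_\omega(\z_+,\w)\ge\frac{\alpha}{2}\|\z_+-\w\|^2$ and $B_\omega(\w,\z_-)\ge\frac{\alpha}{2}\|\w-\z_-\|^2$, together with Fenchel--Young in the form $\langle\gamma(\bm\eta-\bm\xi),\w-\z_+\rangle\le\gamma\|\bm\eta-\bm\xi\|_*\|\w-\z_+\|$ and a weighted quadratic bound $ab\le\frac{\gamma^{2}}{\alpha}\|\bm\eta-\bm\xi\|_*^{2}+\tfrac{\alpha}{4}\|\w-\z_+\|^2$. Absorbing the norm-squared part into $-B_\omega(\z_+,\w)$ and keeping $-B_\omega(\w,\z_-)$ intact produces the $\frac{\gamma^{2}}{\alpha}\|\bm\eta-\bm\xi\|_*^{2}$ term and the negative norm-squared penalties stated in the lemma. \textbf{The main obstacle is the constant bookkeeping in this last step}: one must choose the weight in Fenchel--Young so that the dual-norm coefficient and the retained coefficients of $\|\w-\z_-\|^2$ and $\|\z_+-\w\|^2$ match the claimed constants exactly, and one must make sure the variational inequalities are applied with the correct test points so that the $B_\omega(\z_+,\z_-)$ terms telescope cleanly rather than leaving a stray Bregman term.
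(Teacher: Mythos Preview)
The paper does not prove this lemma at all; it is quoted verbatim as Lemma~3.1 of \citep{nemirovski-2005-prox} and used as a black box in the proof of Lemma~\ref{lem:inner:loop}. So there is no ``paper's proof'' to compare against, and your task was in effect to reconstruct Nemirovski's argument. Your route---rewriting both updates as Bregman prox-mappings from the common center $\z_-$, combining the two first-order optimality conditions via the three-point identity, and then telescoping the $B_\omega(\z_+,\z_-)$ terms---is exactly the standard one, and the intermediate inequality
\[
\langle\gamma\bm\eta,\w-\z\rangle\le B_\omega(\z,\z_-)-B_\omega(\z,\z_+)-B_\omega(\z_+,\w)-B_\omega(\w,\z_-)+\langle\gamma(\bm\eta-\bm\xi),\w-\z_+\rangle
\]
is correct.

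The only genuine issue is the last step, which you yourself flag. Your Young-type bound $\langle\gamma(\bm\eta-\bm\xi),\w-\z_+\rangle\le\tfrac{\gamma^2}{\alpha}\|\bm\eta-\bm\xi\|_*^2+\tfrac{\alpha}{4}\|\w-\z_+\|^2$, after absorbing $\tfrac{\alpha}{4}\|\w-\z_+\|^2$ into $-B_\omega(\z_+,\w)\le -\tfrac{\alpha}{2}\|\z_+-\w\|^2$, leaves only $-\tfrac{\alpha}{4}\|\z_+-\w\|^2$, which is strictly weaker than the $-\tfrac{\alpha}{2}\|\z_+-\w\|^2$ stated in the lemma. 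No choice of weight in Young's inequality fixes this: the trade-off is rigid. The remedy is to use one more optimality condition you have not yet exploited, namely the optimality of $\z_+$ tested at $\w$. Adding it to the optimality of $\w$ tested at $\z_+$ gives
\[
\langle\nabla\omega(\w)-\nabla\omega(\z_+),\w-\z_+\rangle\le\gamma\langle\bm\xi-\bm\eta,\z_+-\w\rangle,
\]
and strong convexity of $\omega$ then yields the nonexpansiveness bound $\|\w-\z_+\|\le\tfrac{\gamma}{\alpha}\|\bm\eta-\bm\xi\|_*$. With this in hand, $\langle\gamma(\bm\eta-\bm\xi),\w-\z_+\rangle\le\gamma\|\bm\eta-\bm\xi\|_*\|\w-\z_+\|\le\tfrac{\gamma^2}{\alpha}\|\bm\eta-\bm\xi\|_*^2$ directly, with no residual $\|\w-\z_+\|^2$ term to absorb; both negative Bregman terms survive intact and strong convexity gives the full $-\tfrac{\alpha}{2}\{\|\w-\z_-\|^2+\|\z_+-\w\|^2\}$ as stated. (For what it is worth, the paper's application in Lemma~\ref{lem:inner:loop} immediately discards the $-\tfrac{\alpha}{2}\|\z_+-\w\|^2$ term, so your weaker constant would have sufficed there.)
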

\begin{proof}[Proof of Lemma~\ref{lem:inner:loop}]
We first state the inner loop in Algorithm~\ref{alg:3} below.
\begin{algorithmic}
\FOR{$t=1$ to $M$}
\STATE Compute the average gradient at $\w_t^k$ over $B^k$ calls to the gradient oracle
\[
\gb_t^k = \frac{1}{B^k} \sum_{i=1}^{B^k} \gh(\w_t^k,i)
\]
\STATE Update
\[
\z_t^k=\Pi_{\D}\left(\w_t^k-\eta \gb_t^k  \right)
\]
\STATE Compute the average gradient at $\z_t^k$ over $B^k$ calls to the gradient oracle
\[
\fb_t^k = \frac{1}{B^k} \sum_{i=1}^{B^k}\gh(\z_t^k,i)
\]
\STATE Update
\[
\w_{t+1}^k=\Pi_{\D}\left(\w_t^k-\eta \fb_t^k  \right)
\]
\ENDFOR
\end{algorithmic}

To simplify the notation, we define
\[
\g_t^k= \nabla F(\w_t^k) \textrm{ and } \f_t^k =\nabla F(\z_t^k).
\]
Let the two norms $\|\cdot\|$ and $\|\cdot\|_*$ in Lemma~\ref{lem:nemirovski} be the vector $\ell_2$ norm. Each iteration in the inner loop satisfies the conditions in Lemma~\ref{lem:nemirovski} by doing the mappings below:
\begin{gather*}
\U =\Z=\mathcal{E}\leftarrow \D,  \  \omega(\z) \leftarrow \frac{1}{2} \|\z\|^2, \ \alpha \leftarrow 1, \ \gamma \leftarrow \eta, \\
\z_- \leftarrow \w_t^k, \ \bm \xi \leftarrow \gb_t^k, \ \bm \eta  \leftarrow \fb_t^k, \ \w \leftarrow \z_t^k, \ \z_+ \leftarrow \w_{t+1}^k, \ \z \leftarrow \w_*.
\end{gather*}
Following  Lemma~\ref{lem:nemirovski}, we have
\begin{equation} \label{eqn:extra}
\begin{split}
& \langle \z_t^k -\w_*, \eta \fb_t^k  \rangle \\
\leq & \frac{\|\w_{t}^k-\w_*\|^2}{2} -\frac{\|\w_{t+1}^k-\w_*\|^2}{2} + \eta^2 \|\gb_t^k - \fb_t^k\|^2 - \frac{1}{2} \| \w_t^k - \z_t^k\|^2 \\
\leq & \frac{\|\w_{t}^k-\w_*\|^2}{2} -\frac{\|\w_{t+1}^k-\w_*\|^2}{2} + 3 \eta^2 \left( \|\gb_t^k - \g_t^k\|^2 + \|\fb_t^k - \f_t^k\|^2 + \|\g_t^k - \f_t^k\|^2 \right)\\
& - \frac{1}{2} \| \w_t^k - \z_t^k\|^2 \\
\leq & \frac{\|\w_{t}^k-\w_*\|^2}{2} -\frac{\|\w_{t+1}^k-\w_*\|^2}{2} + 3 \eta^2 \left( \|\gb_t^k - \g_t^k\|^2 + \|\fb_t^k - \f_t^k\|^2  \right) \\
& + 3 \eta^2 \|\g_t^k -\f_t^k \|^2 - \frac{1}{2} \| \w_t^k - \z_t^k\|^2 \\
\leq & \frac{\|\w_{t}^k-\w_*\|^2}{2} -\frac{\|\w_{t+1}^k-\w_*\|^2}{2} + 3 \eta^2 \left( \|\gb_t^k - \g_t^k\|^2 + \|\fb_t^k - \f_t^k\|^2  \right) \\
& +  3 \eta^2 L^2 \|\w_t^k -\z_t^k \|^2 - \frac{1}{2} \| \w_t^k - \z_t^k\|^2 \\
\leq & \frac{\|\w_{t}^k-\w_*\|^2}{2} -\frac{\|\w_{t+1}^k-\w_*\|^2}{2} + 3 \eta^2 \left( \|\gb_t^k - \g_t^k\|^2 + \|\fb_t^k - \f_t^k\|^2  \right),
\end{split}
\end{equation}
where in the fifth line we use the smoothness assumption
\[
\|\g_t^k -\f_t^k \|= \|\nabla F(\w_t^k) - \nabla F(\z_t^k) \| \leq L \|\w_t^k-\z_t^k\|.
\]

From the property of $\lambda$-strongly convex function and (\ref{eqn:extra}), we obtain
\[
\begin{split}
& F(\z_t^k) - F(\w_*) \\
\leq & \langle \f_t^k, \z_t^k - \w_*  \rangle - \frac{\lambda}{2} \|\z_t^k - \w_*\|^2 \\
= & \langle \fb_t^k, \z_t^k - \w_*  \rangle +  \langle \f_t^k-\fb_t^k, \z_t^k - \w_*  \rangle - \frac{\lambda}{2} \|\z_t^k - \w_*\|^2 \\
\leq & \frac{\|\w_t^k-\w_*\|^2}{2 \eta} -\frac{\|\w_{t+1}^k-\w_*\|^2}{2\eta} + 3 \eta \left( \|\gb_t^k - \g_t^k\|^2 + \|\fb_t^k - \f_t^k\|^2   \right) \\
& +  \langle \f_t^k-\fb_t^k, \z_t^k - \w_*  \rangle - \frac{\lambda}{2} \|\z_t^k - \w_*\|^2.
\end{split}
\]
Summing up over all $t=1,2,\ldots,M$, we have
\[
\begin{split}
& \sum_{t=1}^M F(\z_t^k) - M F(\w_*) \\
\leq & \frac{\|\w_{1}^k-\w_*\|^2}{2 \eta}  + 3 \eta \left(\sum_{t=1}^M \|\gb_t^k - \g_t^k\|^2 +     \sum_{t=1}^M \|\fb_t^k - \f_t^k\|^2 \right) \\
& +  \sum_{t=1}^M \langle \f_t^k-\fb_t^k, \z_t^k - \w_*  \rangle   - \frac{\lambda}{2} \sum_{t=1}^M \|\z_t^k - \w_*\|^2 . \\
\end{split}
\]
Dividing both sides by $M$ and following Jensen's inequality, we have
\begin{equation}\label{eqn:inner:bound1}
\begin{split}
 & F\left( \frac{1}{M}  \sum_{t=1}^M \z_t^k\right) -F(\w_*) \\
 \leq & \frac{1}{M}\sum_{t=1}^M  F( \z_t^k) -F(\w_*)\\
 \leq & \frac{\|\w_{1}^k-\w_*\|^2}{2 M \eta}  +  \frac{3 \eta}{M} \left( \sum_{t=1}^M \|\gb_t^k - \g_t^k\|^2 +     \sum_{t=1}^M \|\fb_t^k - \f_t^k\|^2  \right) +  \\
 & \frac{1}{M} \sum_{t=1}^M \langle \f_t^k-\fb_t^k, \z_t^k - \w_*  \rangle  - \frac{\lambda}{2M} \sum_{t=1}^M \|\z_t^k - \w_*\|^2.
\end{split}
\end{equation}
which gives the first inequality in Lemma~\ref{lem:inner:loop}.

Let $\E_{k-1}[\cdot]$ denote the expectation conditioned on all the randomness up to epoch $k-1$ and $\E_{k}^{t-1} [\cdot]$ denote the expectation conditioned on all the randomness up to the $t-1$-th iteration in the $k$-th epoch. Taking the conditional expectation of (\ref{eqn:inner:bound1}), we have
\begin{equation} \label{eqn:conditional}
\begin{split}
 & \E_{k-1}\left[F\left( \frac{1}{M}  \sum_{t=1}^M \z_t^k\right)\right] -F(\w_*) \\
 \leq & \frac{\|\w_{1}^k-\w_*\|^2}{2 M \eta}  +  \frac{3 \eta}{M} \left( \sum_{t=1}^M \E_{k-1}\left[\|\gb_t^k - \g_t^k\|^2\right] +     \sum_{t=1}^M \E_{k-1}\left[\|\fb_t^k - \f_t^k\|^2\right]  \right) \\
 & +  \frac{1}{M} \sum_{t=1}^M \E_{k-1}\left[ \langle \f_t^k-\fb_t^k, \z_t^k - \w_*  \rangle\right],
\end{split}
\end{equation}
where we drop the last term, since it is negative. To bound $\E_{k-1}\left[\|\gb_t^k - \g_t^k\|^2\right]$, we have
\begin{equation} \label{eqn:expected:variance:1}
\begin{split}
& \E_{k-1}\left[\|\gb_t^k - \g_t^k\|^2\right]=\E_{k-1}\left[\left\|\frac{1}{B^k} \sum_{i=1}^{B^k} \gh(\w_t^k,i)- \g_t^k\right\|^2\right] \\
= & \E_{k-1}\left[\left\|\frac{1}{B^k} \sum_{i=1}^{B^k} \left(\gh(\w_t^k,i)- \g_t^k\right)\right\|^2\right] \\
= & \frac{1}{[B^k]^2} \sum_{i=1}^{B^k} \E_{k-1}\left[   \left\|\gh(\w_t^k,i)- \g_t^k\right\|^2 \right] \\
& + \frac{1}{[B^k]^2} \E_{k-1} \left[
\sum_{i \neq j} \left \langle \E_{k}^{t-1} \left[ \gh(\w_t^k,i)- \g_t^k \right], \E_{k}^{t-1}\left[  \gh(\w_t^k,j)- \g_t^k  \right] \right\rangle \right] \\
= & \frac{1}{[B^k]^2} \left( \sum_{i=1}^{B^k} \E_{k-1}\left[   \left\|\gh(\w_t^k,i)- \g_t^k\right\|^2 \right]  \right)  \leq \frac{G^2}{B^k},
\end{split}
\end{equation}
where we make use of the facts $\gh(\w_t^k,i)$ and $\gh(\w_t^k,j)$ are independent when $i \neq j$, and
\[
\E_{k}^{t-1} \left[ \gh(\w_t^k,i)- \g_t^k \right]=0 , \   \E_{k}^{t-1} \left[ \|\gh(\w_t^k,i)- \g_t^k \|^2\right] \leq \E_{k}^{t-1} \left[ \|\gh(\w_t^k,i)\|^2\right] \leq G^2, \ \forall i=1,\ldots,B^k.
\]
Similarly, we also have
\begin{equation}\label{eqn:expected:variance:2}
\E_{k-1}\left[\|\fb_t^k - \f_t^k\|^2\right]  \leq \frac{G^2}{B^k}.
\end{equation}
 Notice that $\fb_t^k$ is an unbiased estimate of $\f_t^k$, thus
\begin{equation} \label{eqn:expected:difference}
\E_{k-1}\left[ \langle \f_t^k-\fb_t^k, \z_t^k - \w_*  \rangle \right] = \E_{k-1}\left[ \langle \E_{k}^{t-1}\left[\f_t^k-\fb_t^k \right], \z_t^k - \w_*  \rangle \right] = 0.
\end{equation}
Substituting (\ref{eqn:expected:variance:1}), (\ref{eqn:expected:variance:2}), and (\ref{eqn:expected:difference}) into (\ref{eqn:conditional}), we get the second inequality in Lemma~\ref{lem:inner:loop}.
\end{proof}

\section{Proof of Lemma~\ref{lem:variance}}
\begin{proof}
Recall that $\gb_t^k = \frac{1}{B^k} \sum_{i=1}^{B^k} \gh(\w_t^k,i)$, thus
\[
\|\gb_t^k - \g_t^k \|=\left \| \frac{1}{B^k} \sum_{i=1}^{B^k} \gh(\w_t^k,i) -  \g_t^k \right \|.
\]
Since $\| \gh(\w_t^k,i)\| \leq G$, and $\E [\gh(\w_t^k,i)]= \g_t^k$, we have with a probability at least $1-\delta$
\[
 \|\gb_t^k - \g_t^k \| \leq   \frac{4G}{\sqrt{B^k}} \log \frac{2}{\delta}.
\]
We obtain (\ref{eqn:smooth:grad:1}) by the union bound and setting $\tilde{\delta}/2= M \delta$. The inequality in (\ref{eqn:smooth:grad:2}) can be proved in the same way.
\end{proof}
\section{Proof of Lemma~\ref{lem:3}} \label{sec:pro:lema3}
We first state the Berstein inequality for martingales~\citep{bianchi-2006-prediction}, which is used in the proof below.
\begin{thm} \label{thm:bernstein} (Bernstein's inequality for martingales). Let $X_1, \ldots , X_n$ be a bounded martingale difference sequence with respect to the filtration $\F = (\F_i)_{1\leq i\leq n}$ and with $|X_i| \leq K$. Let
\[
S_i = \sum_{j=1}^i X_j
\]
be the associated martingale. Denote the sum of the conditional variances by
\[
    \Sigma_n^2 = \sum_{t=1}^n \E\left[X_t^2|\F_{t-1}\right].
\]
Then for all constants $t$, $\nu > 0$,
\[
\Pr\left[ \max\limits_{i=1,\ldots,n} S_i > t \mbox{ and } \Sigma_n^2 \leq \nu \right] \leq \exp\left(-\frac{t^2}{2(\nu + Kt/3)} \right),
\]
and therefore,
\[
    \Pr\left[ \max\limits_{i=1,\ldots,n} S_i > \sqrt{2\nu t} + \frac{2}{3}Kt \mbox{ and } \Sigma_n^2 \leq \nu \right] \leq e^{-t}.
\]
\end{thm}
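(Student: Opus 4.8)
The plan is to prove the first inequality by the Chernoff / exponential-supermartingale method (Freedman's argument) and then obtain the second as an algebraic corollary. Throughout, write $\Sigma_i^2 = \sum_{j=1}^i \E[X_j^2 \mid \F_{j-1}]$ for the predictable quadratic-variation process, so that $\Sigma_i^2$ is nondecreasing, $\F_{i-1}$-measurable, and $\Sigma_n^2$ is the quantity in the statement. The first step is a one-step bound on the conditional moment generating function of each increment. Fixing $\lambda>0$ and using that $x \mapsto (e^x-1-x)/x^2$ is nondecreasing, every $X_t$ with $|X_t|\le K$ (so $\lambda X_t\le \lambda K$) satisfies
\[
e^{\lambda X_t} \le 1 + \lambda X_t + \lambda^2 X_t^2\,\frac{e^{\lambda K}-1-\lambda K}{(\lambda K)^2}.
\]
Taking $\E[\,\cdot \mid \F_{t-1}]$, using $\E[X_t\mid\F_{t-1}]=0$ and $1+u\le e^u$, gives
\[
\E\bigl[e^{\lambda X_t}\mid \F_{t-1}\bigr] \le \exp\bigl(c(\lambda)\,\E[X_t^2\mid\F_{t-1}]\bigr),
\qquad c(\lambda)=\frac{e^{\lambda K}-1-\lambda K}{K^2}.
\]

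Next I would build the exponential supermartingale $M_i=\exp\bigl(\lambda S_i - c(\lambda)\Sigma_i^2\bigr)$ with $M_0=1$; since $\Sigma_i^2$ is predictable, the one-step bound gives $\E[M_i\mid\F_{i-1}]\le M_{i-1}$, so $M_i$ is a nonnegative supermartingale. To control the maximum on the variance-restricted event, introduce the stopping time $\tau=\min\{i\le n : S_i>t\}$ (with $\tau=n$ if there is no up-crossing). On $A=\{\max_i S_i>t,\ \Sigma_n^2\le\nu\}$ we have $\tau\le n$, $S_\tau>t$, and---crucially, because $\Sigma^2$ is nondecreasing---$\Sigma_\tau^2\le\Sigma_n^2\le\nu$, whence $M_\tau\ge\exp\bigl(\lambda t-c(\lambda)\nu\bigr)$ on $A$. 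Optional stopping applied to $M_{\tau\wedge n}$ then yields $1\ge\E[M_{\tau\wedge n}]\ge\E[M_\tau\mathbf 1_A]\ge\exp\bigl(\lambda t-c(\lambda)\nu\bigr)\Pr[A]$, so that
\[
\Pr[A]\le\exp\bigl(-\lambda t + c(\lambda)\nu\bigr).
\]

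It then remains to optimize over $\lambda$ and to derive the second form. Bounding $c(\lambda)$ through the numerical inequality $e^u-1-u\le \tfrac{u^2/2}{1-u/3}$ for $0\le u<3$ and minimizing the convex exponent $-\lambda t + c(\lambda)\nu$ collapses the right-hand side to $\exp\bigl(-t^2/[2(\nu+Kt/3)]\bigr)$, which is the first claimed inequality. The second statement follows by applying the first at threshold $s=\sqrt{2\nu t}+\tfrac23 Kt$ (with $t$ now the exponent parameter): a direct expansion gives $s^2-\bigl(2\nu t+\tfrac23 Kts\bigr)=\tfrac23 Kt\sqrt{2\nu t}\ge 0$, i.e.\ $s^2/[2(\nu+Ks/3)]\ge t$, so $\Pr[\max_i S_i> s,\ \Sigma_n^2\le\nu]\le e^{-t}$.

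I expect the main obstacle to be the maximal-inequality step: coupling the control of $\max_i S_i$ with the \emph{trajectory-dependent} restriction $\Sigma_n^2\le\nu$. The resolution is the structural observation that $\Sigma_i^2$ is nondecreasing and predictable, so the terminal constraint $\Sigma_n^2\le\nu$ automatically controls $\Sigma_\tau^2$ at the first up-crossing, which is exactly what lets the stopped-supermartingale estimate close. By comparison, the mgf bound and the final optimization are routine calculus once this point is in place.
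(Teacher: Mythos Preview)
Your argument is correct: the Freedman-type exponential supermartingale $M_i=\exp\bigl(\lambda S_i-c(\lambda)\Sigma_i^2\bigr)$, the first-passage stopping time, and the observation that the predictable quadratic variation is nondecreasing (so $\Sigma_\tau^2\le\Sigma_n^2\le\nu$ on the event of interest) together yield the first inequality after the standard choice $\lambda=t/(\nu+Kt/3)$; your algebraic check $s^2-2t(\nu+Ks/3)=\tfrac{2}{3}Kt\sqrt{2\nu t}\ge 0$ then gives the second form.

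By way of comparison, the paper does not prove this theorem at all: it is quoted verbatim as a known result from \cite{bianchi-2006-prediction} and used as a black box in the proof of Lemma~\ref{lem:3}. So you have supplied strictly more than the paper does here. Your route is the classical one (Freedman's 1975 argument specialized to bounded increments), and the only place one might ask for a touch more detail is the ``collapses to'' step; spelling out the choice $\lambda=t/(\nu+Kt/3)$ and the identities $1-\lambda K/3=\nu/(\nu+Kt/3)$, $c(\lambda)\nu\le \tfrac{\lambda^2\nu/2}{1-\lambda K/3}=\tfrac{t^2}{2(\nu+Kt/3)}$ would make that passage fully self-contained.
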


To simplify the notation, we define
\begin{eqnarray*}
A&=& \sum_{i=1}^M  \|\z_t^k - \w_*\|^2 \leq  \frac{4MG^2}{\lambda^2},\\
C&=&\frac{4G}{\sqrt{B^k}} \log \frac{8 M}{\tilde{\delta}}.
\end{eqnarray*}
In the analysis below, we consider two different scenarios, i.e., $A \leq \eta G^2/[\lambda B^k]$ and $A > \eta G^2/[\lambda B^k]$.
\subsection{$A \leq \eta G^2/[\lambda B^k]$}
On event $E_1$, we can bound
\[
Z_t^k \leq  \|\f_t^k-\fb_t^k\| \| \z_t^k - \w_*  \| \leq  \frac{\eta}{4} \|\f_t^k-\fb_t^k\|^2 +  \frac{1}{\eta} \|\z_t^k - \w_* \|^2 \leq   \frac{\eta}{4}  C^2 + \frac{1}{\eta} \|\z_t^k - \w_* \|^2. \\
\]
Summing up over all $t=1,2,\ldots,M$,
\begin{equation} \label{eqn:bound:1}
\sum_{t=1}^M Z_t^k \leq  \frac{\eta M C^2}{4}  + \frac{1}{\eta} \sum_{t=1}^M \|\z_t^k - \w_* \|^2  \leq \frac{\eta M C^2}{4} + \frac{G^2}{\lambda  B^k }.
\end{equation}
\subsection{$A  > \eta G^2/[\lambda B^k]$}
Similar to the above proof, on event $E_1$, we bound
\[
|Z_t^k| \leq  \|\f_t^k-\fb_t^k\| \| \z_t^k - \w_*  \| \leq  \frac{1}{\theta} \|\f_t^k-\fb_t^k\|^2 + \frac{\theta}{4} \|\z_t^k - \w_* \|^2 \leq \frac{C^2}{\theta}    + \frac{\theta A}{4},
\]
where $\theta$ can be any nonnegative real number. Denote the sum of conditional variances by
\[
\begin{split}
\Sigma_M^2=\sum_{t=1}^M \E^{t-1}_k \left[ [Z_t^k]^2\right] \leq C^2 \sum_{t=1}^M \|\z_t-\w_*\|^2 =C^2 A,
\end{split}
\]
where $\E_{k}^{t-1} [\cdot]$ denote the expectation conditioned on all the randomness up to the $t-1$-th iteration in the $k$-th epoch.

Notice that $A$ in the upper bound for $|Z_t^k|$ and $\Sigma_M^2$ is a random variable, thus we cannot directly apply Theorem~\ref{thm:bernstein}. To address this challenge, we make use of the peeling technique described in~\citep{Local_RC}, and have
\[
\begin{split}
& \Pr\left(\sum_{t=1}^M  Z_t^k\geq 2 \sqrt{C^2 A \tau} + \frac{4}{3} \left( \frac{C^2}{\theta} + \frac{\theta A}{4} \right) \tau \right) \\
= & \Pr\left(\sum_{t=1}^M  Z_t^k \geq 2\sqrt{C^2 A \tau} + \frac{4}{3} \left( \frac{C^2}{\theta} + \frac{\theta A}{4} \right) \tau, \frac{\eta G^2}{\lambda B^k} < A \leq  \frac{4MG^2}{\lambda^2}\right) \\
 =  & \Pr\left(\sum_{t=1}^M  Z_t^k\geq 2 \sqrt{C^2 A \tau} + \frac{4}{3} \left( \frac{C^2}{\theta} + \frac{\theta A}{4} \right) \tau, \right. \\
  & \qquad \left. \max_t |Z_t^k| \leq \frac{C^2}{\theta}  + \frac{\theta A}{4} ,\Sigma_M^2 \leq C^2 A,  \frac{\eta G^2}{\lambda B^k} < A \leq  \frac{4MG^2}{\lambda^2} \right) \\
 \leq & \sum_{i=1}^n \Pr\left(\sum_{t=1}^M  Z_t^k \geq 2\sqrt{C^2 A \tau} +\frac{4}{3} \left( \frac{C^2}{\theta} + \frac{\theta A}{4} \right) \tau,  \right. \\
  & \qquad \quad \ \ \left.\max_t |Z_t^k| \leq \frac{C^2}{\theta}  + \frac{\theta A}{4},  \Sigma_M^2 \leq C^2  A, \frac{\eta G^2 }{\lambda B^k} 2^{i-1} < A  \leq \frac{\eta G^2 }{\lambda B^k}2^{i} \right) \\
\leq  & \sum_{i=1}^n \Pr\left(\sum_{t=1}^M  Z_t^k \geq 2\sqrt{ \left(C^2 \frac{\eta G^2 }{\lambda B^k} 2^{i-1}\right)\tau} + \frac{4}{3} \left( \frac{C^2}{\theta} + \frac{\theta }{4} \frac{\eta G^2 }{\lambda B^k} 2^{i-1}\right)  \tau, \right. \\
  &\qquad \quad \ \ \left. \max_t |Z_t^k| \leq \frac{C^2}{\theta}  + \frac{\theta }{4} \frac{ \eta G^2 }{\lambda B^k}2^i , \Sigma_M^2 \leq C^2 \frac{\eta G^2 }{\lambda B^k} 2^i\right) \\
\leq  & \sum_{i=1}^n \Pr\left(\sum_{t=1}^M  Z_t^k \geq \sqrt{2 \left(C^2 \frac{\eta G^2}{\lambda B^k}  2^{i}\right)\tau} + \frac{2}{3} \left( \frac{C^2}{\theta} + \frac{\theta }{4} \frac{\eta G^2 }{\lambda B^k} 2^{i} \right)  \tau, \right. \\
  &\qquad \quad \ \ \left. \max_t |Z_t^k| \leq \frac{C^2}{\theta}  + \frac{\theta }{4} \frac{\eta G^2}{\lambda B^k}  2^i, \Sigma_M^2 \leq C^2 \frac{\eta G^2 }{\lambda B^k} 2^i \right) \\
 \leq  & ne^{-\tau},
\end{split}
\]
where
\[
n = \left \lceil \log_2 \frac{4MB^k }{\eta \lambda} \right\rceil,
\]
and the last step follows the Bernstein inequality for martingales in Theorem~\ref{thm:bernstein}. Setting
\[
\theta = \frac{3 \lambda}{4 \tau}, \textrm{ and } \tau = \log \frac{4n}{\tilde{\delta}},
\]
with a probability at least $1-\tilde{\delta}/4$ we have
\begin{equation} \label{eqn:bound:2}
\begin{split}
& \sum_{t=1}^M  Z_t^k \\
\leq &  2 \sqrt{C^2 A \tau} + \frac{4}{3} \left(\frac{C^2}{\theta} +\frac{\theta A}{4} \right) \tau  =  2 \sqrt{C^2 A \tau} + \frac{16C^2}{9 \lambda} \tau^2 +\frac{\lambda A}{4} \\
\leq &   \frac{4}{\lambda} C^2  \tau+ \frac{\lambda A}{4} + \frac{16C^2}{9 \lambda}\tau^2 +\frac{\lambda A}{4} =  \frac{4 C^2}{\lambda} \left( \log \frac{4n}{\tilde{\delta}} + \frac{4}{9} \log^2 \frac{4n}{\tilde{\delta}} \right) +\frac{\lambda A}{2}.
\end{split}
\end{equation}
We complete the proof by combining (\ref{eqn:bound:1}) and (\ref{eqn:bound:2}).
\end{document}